\documentclass[11pt]{article}
\usepackage[margin=1in]{geometry}

\PassOptionsToPackage{numbers,sort&compress}{natbib}

\usepackage[utf8]{inputenc}   
\usepackage[T1]{fontenc}      
\usepackage{microtype}        

\usepackage{amsmath, amssymb, amsthm, amsfonts}
\usepackage{bbm}
\usepackage{nicefrac}         

\usepackage{booktabs}         
\usepackage{tabularx}
\usepackage{multirow}
\usepackage{multicol}

\usepackage{graphicx}
\usepackage{subcaption}
\usepackage{caption}
\usepackage{tikz}
\usepackage{wrapfig}
\usepackage{xcolor}

\usepackage{algorithm}
\usepackage{algorithmic}      

\usepackage{tcolorbox}
\tcbuselibrary{skins,breakable}
\usepackage{framed}
\usepackage{fancyvrb}

\usepackage[inline]{enumitem}

\usepackage{natbib}

\usepackage{hyperref}
\hypersetup{
    colorlinks=true,
    linkcolor=blue,
    citecolor=blue,
    urlcolor=blue
}
\usepackage{url}

\usepackage{etoc}
\usepackage{etoolbox}
\usepackage{titletoc}

\usepackage{authblk}

\makeatletter
  \newif\ifinappendix
  \inappendixfalse
  \let\origaddcontentsline\addcontentsline
  \renewcommand{\addcontentsline}[3]{%
    \ifinappendix
      \origaddcontentsline{#1}{#2}{#3}%
    \fi
  }
  \pretocmd{\appendix}{\inappendixtrue}{}{}
\makeatother

\newtheorem{theorem}{Theorem}[section]

\newtheorem{proposition}[theorem]{Proposition}

\newtheorem{claim}[theorem]{Claim}

\theoremstyle{definition}

\newtheorem{assumption}[theorem]{Assumption}

\theoremstyle{remark}

\tcbset{
  colback=gray!5,
  colframe=black!75!black,
  boxrule=0.5mm,
  arc=4mm
}

\newtcolorbox{promptbox}[1]{
    colback=gray!5,
    colframe=gray!70!black,
    coltitle=white,
    fonttitle=\bfseries\sffamily,
    rounded corners,
    arc=3pt,
    title=#1,
    left=10pt, right=10pt, top=10pt, bottom=10pt,
    fontupper=\ttfamily\small
}

\newcommand{\SN}[1]{\noindent{\textcolor{purple}{\textbf{SN:}}}}

\title{When to Trust the Cheap Check: Weak and Strong Verification for Reasoning}

\author[1]{Shayan Kiyani}
\author[1]{Sima Noorani}
\author[1]{George Pappas}
\author[1]{Hamed Hassani}

\affil[1]{University of Pennsylvania \protect\\
\texttt{\{shayank, nooranis, pappasg, hassani\}@seas.upenn.edu}}

\begin{document}

\maketitle
\begingroup

\endgroup
\vspace{-2em}
\begin{abstract}
Reasoning with LLMs increasingly unfolds inside a broader verification loop. Internally, systems use cheap checks, such as self-consistency or proxy rewards, which we call \emph{weak verification}. Externally, users inspect outputs and steer the model through feedback until results are trustworthy, which we call \emph{strong verification}. These signals differ sharply in cost and reliability: strong verification can establish trust but is resource-intensive, while weak verification is fast and scalable but noisy and imperfect. We formalize this tension through \emph{weak--strong verification policies}, which decide when to accept or reject based on weak verification and when to defer to strong verification. We introduce metrics capturing incorrect acceptance, incorrect rejection, and strong-verification frequency. Over population, we show that optimal policies admit a two-threshold structure and that \emph{calibration} and \emph{sharpness} govern the value of weak verifiers. Building on this, we develop an online algorithm that provably controls acceptance and rejection errors without assumptions on the query stream, the language model, or the weak verifier. Experiments\footnotemark{} on mathematical reasoning and sequential decision-making demonstrate that our algorithm achieves reliability comparable to exhaustive strong verification while significantly reducing verification cost.
\end{abstract}
\footnotetext{Code available at \url{https://github.com/nooranisima/weak-and-strong-verification/tree/main}}
\section{Introduction}
Recent advances in the reasoning capabilities of large language models have been driven not only by larger models or increased test-time computation, but critically by the incorporation of verification into the inference process. In practice, verification arises on two complementary fronts:

On the user side, model outputs are treated as high-potential candidates that are subject to careful evaluation. This evaluation may involve inspecting outputs line by line or, depending on the domain, testing them externally in the real world. Crucially, this process is informed by context, preferences, and domain knowledge that may extend beyond what can be captured textually. While this form of verification enforces the highest level of trust, it is inherently costly; we refer to it as \emph{strong verification}.

On the model side, verification is engineered to operate at scale, with the goal of approximating the judgments users make under strong verification. This includes self-consistency, learned critiques, or proxy rewards that attempt to anticipate what users would agree with. Additionally, in some domains, specialized tools can provide fast, local checks of correctness (e.g., by executing code), but these typically certify narrow aspects of an output rather than whether a line of reasoning is promising overall, as an expert would judge. We refer to this form of fast, scalable verification as \emph{weak verification}.

This contrast highlights a
fundamental tension: the
mechanisms that most effectively establish trust are costly to
deploy at scale, while the mechanisms that scale effortlessly
are often insufficient to support reliable use on their own. Therefore, what is missing in current reasoning systems is a principled way to orchestrate verification sources. We ask:

\vspace{-1mm}
\begin{center}
\emph{Can we match the reliability we would get if strong verification were applied at every step, while deploying it on only a small, carefully chosen fraction of the reasoning process?}
\end{center}

We answer this question affirmatively by developing a principled framework in which internal verification signals are used not only to refine candidate solutions, but also to inform when stronger verification should be called. Our core contribution is a novel online calibration algorithm, Selective Strong Verification (SSV), that explicitly controls the mismatch between weak and strong verification signals on the subset of instances where decisions are made solely based on weak verification, without consulting the strong signal. This enables strong verification resources to be deployed selectively, while preserving reliable end-to-end behavior. We now outline our contributions and the structure of the paper.

(1) In Section~\ref{sec:prob_def}, we formalize \emph{weak--strong verification policies}: an algorithmic framework that governs when a system should rely solely on a weak verifier and when it should defer to a strong verifier. We introduce three performance metrics that capture the fundamental tradeoffs in this setting. The first two are \emph{type-I} and \emph{type-II} errors, capturing incorrect trust in the weak verifier when it disagrees with the strong verifier: accepting incorrect responses or rejecting correct ones. The third metric is the frequency with which the strong verifier is queried.
    
(2) In Section~\ref{sec:population}, we characterize optimal weak--strong verification policies under population assumptions and show that they admit a two-threshold structure, which underpins our algorithm. Along the way, our analysis highlights \emph{calibration} and \emph{sharpness} as two key properties governing the effectiveness of weak verifiers, a perspective that may be of independent interest.
    
(3) In Section~\ref{sec:practical}, we develop SSV, a finite-sample, online algorithm that provably controls type-I and type-II errors. Our guarantees are fully distribution-free and make no assumptions on the query stream, the behavior of the language model, or the quality of the weak verifier. 

(4) In Section~\ref{sec:experiments}, we evaluate SSV on outcome-level mathematical reasoning and process-level sequential puzzle solving. We demonstrate that our algorithm maintains target error rates in finite-samples while enabling principled navigation of the reasoning-verification cost frontier, achieving near-oracle accuracy with significantly fewer costly calls.


    
\section{ Related Works}
\textbf{LLM reasoning and verification.} Recent progress in LLM reasoning is driven largely on two main axes. First, a large body of work focuses on improving the reasoning process via inference-time reasoning, including structured prompting \citep{wei2023chainofthoughtpromptingelicitsreasoning, yao2023treethoughtsdeliberateproblem, yao2023reactsynergizingreasoningacting}, search \citep{xie2024montecarlotreesearch, xie2023selfevaluationguidedbeamsearch}, decoding strategies \citep{sun2024fastbestofndecodingspeculative,xia2023speculativedecodingexploitingspeculative}, as well as training approaches that elicit longer reasoning chains \citep{wang2025reinforcementlearningreasoninglarge, shao2024deepseekmathpushinglimitsmathematical}. These methods treat weak signals as fixed and strategize reasoning traces to improve final performance.

Second, complementary work improves the weak verification signal itself, including LLM-as-judge evaluation \citep{liu2023gevalnlgevaluationusing}, specialized verifiers \citep{saadfalcon2024aresautomatedevaluationframework,tang2024minicheckefficientfactcheckingllms}, judge-time scaling \cite{kalra2025verdictlibraryscalingjudgetime, saadfalcon2025archonarchitecturesearchframework}, and process-reward models
\citep{wang2024mathshepherdverifyreinforcellms, zhang2025generativeverifiersrewardmodeling, wang2023helpsteermultiattributehelpfulnessdataset}. Our work is orthogonal to both: we take the weak verifier and reasoning procedure as fixed, and study the layer above, orchestrating when to trust the weak signal and when to invoke costly strong verification. This framework applies to any reasoning procedure (single pass, iterative refinement, or tree search) and any scoring model. To the best of our knowledge, this interaction has not been explicitly formulated or analyzed.

\textbf{Selective prediction and learning to defer.}
Algorithmically, our setup relates to selective prediction and learning-to-defer (L2D). Early work established theoretical frameworks for classification with a reject option, posing the problem as risk minimization with explicit rejection costs \citep{10.5555/1390681.1442792, JMLR:v11:el-yaniv10a}. Rather than fixing confidence thresholds post hoc, subsequent work learns when to abstain as part of training \citep{Cortes2023TheoryAA, geifman2017selectiveclassificationdeepneural,gangrade2021onlineselectiveclassificationlimited}, with extensions to the online setting \citep{gangrade2021onlineselectiveclassificationlimited}.
The L2D literature extends selective prediction to human-AI collaboration, studying the optimal division of labor between model and expert \citep{mozannar2021consistentestimatorslearningdefer, okati2021differentiablelearningtriage, verma2022calibratedlearningdeferonevsall, mozannar2023predictexactalgorithmslearning}. 
Our setting can be viewed as an instance of L2D, where deferral means invoking strong verification. The combination of distribution-free online calibration, partial feedback, and separate Type-I/II error control, together with the algorithmic techniques we develop, may be of independent interest to the broader umbrella of L2D.

\section{Weak–Strong Verification Policies and Metrics}\label{sec:prob_def}

We consider a general verification-guided reasoning setting involving a language model and two sources of verification.

\paragraph{Language model and verification oracles.}
Let $P \in \mathcal{P}$ denote a problem instance or prompt that requires reasoning. Let 
$f : \mathcal{P} \to \mathcal{R}$ denote a language model that, given $P$, generates a (possibly random) response $R := f(P)$.

We consider two forms of verification. First, let
$
g : \mathcal{P} \times \mathcal{R} \to \{0,1\}
$
denote a \emph{strong verification} oracle, which outputs a binary judgment indicating whether a response is correct. This oracle represents the strongest form of verification available, such as human inspection or domain specific executions, and serves as the ultimate criterion against which reasoning outcomes are evaluated.

Second, let $w : \mathcal{P} \times \mathcal{R} \to [0,1]$
denote a \emph{weak verification oracle}, which assigns a real-valued score to a prompt--response pair, aiming to approximate strong verification, for example through proxy rewards or domain-specific tools. The continuous nature of $w$ reflects uncertainty: it provides a confidence signal, with larger values indicating greater confidence in correctness.


\paragraph{Stream of queries and responses.}
We assume there exists an arbitrary and unknown stream of queries. At each time step $t = 1,2,\dots$, the language model receives a query $P_t$ and produces a response $R_t = f(P_t)$. We use the notation $w_t := w (P_t, R_t)$ and $g_t := g(P_t, R_t)$.

We place no assumptions on how the stream $\{P_t\}_{t\ge 1}$ is generated. In particular, queries may be independent user prompts, intermediate reasoning steps, or any combination thereof, and the stream may depend arbitrarily on past verification outcomes. This modeling is flexible enough to capture a range of reasoning strategies. For example,

- Each $P_t$ may correspond to a user prompt, each $R_t$ to a full model output. This resembles to a strategy known as output reward modeling in the literature \citep{cobbe2021trainingverifierssolvemath}.


- In step-by-step reasoning, each $P_t$ may consist of a prompt together with the partial solution so far, and each $R_t$ corresponds to a single reasoning step. This resembles process reward modeling in the literature \citep{lightman2023letsverifystepstep}.

In both cases, rejecting/accepting responses can influence future queries,
either by triggering another sample for the same prompt until a budget is exhausted, or by moving on to a different prompt.


\paragraph{ Weak-strong verification policy.}
A weak--strong verification policy is a sequence of (possibly randomized) functions
\[
\pi_{t}(\cdot) : [0,1] \to \{\text{A}, \text{R}, \text{SV}\},\quad \forall \;t\geq 1,
\]
which maps a weak verification score to one of three actions:

\textbf{A:} accept the response without asking for strong verification.

\textbf{R:} reject the response without asking for strong verification.

\textbf{SV:} query the strong verifier; accept the response if $g(P,R)=1$ and reject it otherwise.

At iteration $t = 1,2,\dots$, the system proceeds as follows:

- The language model generates a response $R_t = f(P_t)$.

- The weak verifier is queried, yielding a score $w_t$.

- The policy $\pi_t(w_t)$ determines whether to accept or reject the response, with or without calling strong verification.

The policy $\{\pi_t\}_{t>0}$ thus governs when the weak signal is sufficient to accept or reject a response, and when the algorithm defers to strong verification $g_t$. Figure~\ref{fig:alg-flow-diagram} depicts the overall flow.

\begin{figure}[t]
    \centering
    \includegraphics[width=\linewidth]{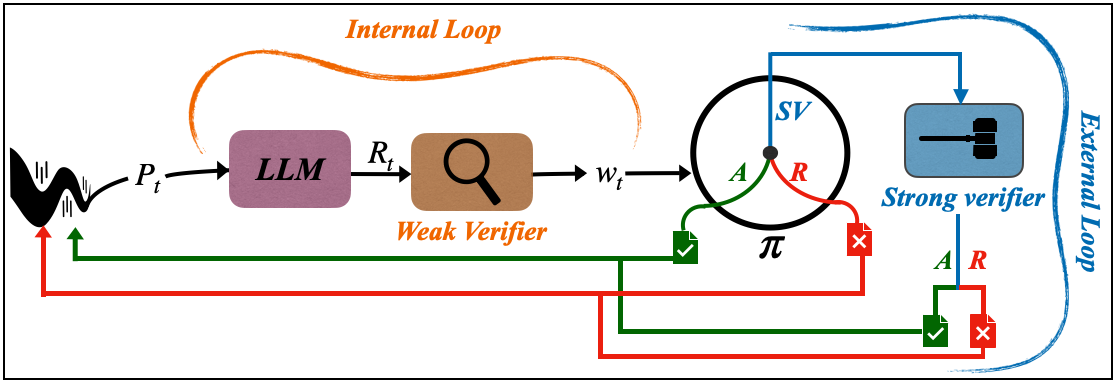}
    \caption{The architecture of weak-strong verification for LLM reasoning.}
    \label{fig:alg-flow-diagram}
\end{figure}
\paragraph{Performance metrics.}
We evaluate a verification policy along the interaction sequence, without imposing any distributional assumptions on the stream $\{(P_t,R_t)\}_{t\ge 1}$. For a fixed horizon $T\ge 1$, define the index sets
\begin{align*}
    \mathcal{S}_0(T) := \{t \le T : g_t=0\}, \,\,\,  \mathcal{S}_1(T) := \{t \le T : g_t=1\},
\end{align*}
and their sizes $N_0(T):=|\mathcal{S}_0(T)|$ and $N_1(T):=|\mathcal{S}_1(T)|$. We emphasize that $g_t$ always exists conceptually, but is only observed when the strong verifier is queried.

We define the following empirical performance quantities:

\textbf{Type-I error (incorrect acceptance):}\footnote{
If $N_0(T)=0$ (respectively, $N_1(T)=0$), we set
$\mathrm{Err}^{\mathrm{I}}(T)=0$ (respectively,
$\mathrm{Err}^{\mathrm{II}}(T)=0$).
}
\[
\mathrm{Err}^{\mathrm{I}}(T)
:=
\frac{1}{N_0(T)}
\sum_{t=1}^T
\mathbf{1}\!\left\{ g_t=0,\; \pi_t(r_t)=\mathrm{A} \right\}.
\]
    This is the empirical frequency of accepting a response among those rounds
    where the strong verifier would deem it incorrect, i.e., acceptance errors
    \emph{conditioned on} $g_t=0$.

\textbf{Type-II error (incorrect rejection):}
    \[
    \mathrm{Err}^{\mathrm{II}}(T)
    :=
    \frac{1}{N_1(T)}
    \sum_{t=1}^T
    \mathbf{1}\!\left\{ g_t=1,\; \pi_t(r_t)=\mathrm{R} \right\}.
    \]
    This is the empirical frequency of rejecting a response among those rounds
    where the strong verifier would deem it correct, i.e., rejection errors
    \emph{conditioned on} $g_t=1$.

\textbf{Strong verification frequency:}
    \[
    \mathrm{SV}(T)
    :=
    \frac{1}{T}
    \sum_{t=1}^T
    \mathbf{1}\!\left\{ \pi_t(r_t)=\mathrm{SV} \right\}.
    \]
    This is the average rate at which the algorithm calls strong verification and thus reflects verification cost or latency.

Type-II error and strong verification frequency impose additional system load, such as increased user latency or higher computational and operational costs. In contrast, type-I error corresponds to degraded output quality, manifesting as hallucinations or incorrect responses, which reduces system value and erodes trust among downstream decision makers.

A good verification policy must therefore seek to keep all three quantities small. However, there are inherent tradeoffs between them. For a fixed language model, weak verifier, and strong verifier, enforcing sufficiently small type-I and type-II error rates may necessarily require a higher frequency of strong verification calls. This leads to an engineering tradeoff on the service provider side: given the quality of the language model and the available verification signals, how should one prioritize these competing objectives?

In the next section, we first take a different route by studying the Pareto-optimal tradeoff achievable by verification policies over population. This allows us to derive the structure of optimal policies and to motivate specific algorithmic design choices. Building upon that, we then turn to the the goal of controlling type-I and type-II errors while minimizing the use of strong verification in the sequential setting.

\section{Fundamental Tradeoffs of Weak-Strong Verification}\label{sec:population}

Here we study the fundamental tradeoffs among the three quantities introduced in Section~\ref{sec:prob_def} through a simplified, population-level formulation. This is not intended to address the sequential problem of Section \ref{sec:prob_def}, but rather to isolate a one-shot version of the problem where tradeoffs can be characterized explicitly. The insights from this population perspective will guide the design of the \emph{sequential, distribution-free algorithm} developed in the next section.

\paragraph{Population-level formulation.}
We consider a one-shot setting in which a prompt--response pair $(P,R)$ is drawn from a joint distribution $\mathcal{D}$, with $R=f(P)$. The language model $f$, weak verifier $w:\mathcal{P}\times\mathcal{R}\to[0,1]$, and strong verifier $g:\mathcal{P}\times\mathcal{R}\to\{0,1\}$ are treated as fixed.

A weak--strong verification policy is a mapping
$
\pi:[0,1]\to\{\mathrm{A},\mathrm{R},\mathrm{SV}\},
$
which assigns an action based on the weak score $w(P,R)$. Time indices are unnecessary in this one-shot setting.

\paragraph{Population metrics.}
We define probabilistic counterparts of the empirical quantities from Section~2:
\begin{itemize}
    \item \textbf{Type-I error:} $\Pr_{\mathcal{D}}[\pi(w(P,R))=\mathrm{A}\mid g(P,R)=0]$.
    \item \textbf{Type-II error:} $\Pr_{\mathcal{D}}[\pi(w(P,R))=\mathrm{R}\mid g(P,R)=1]$.
    \item \textbf{Strong verification rate:} $\Pr_{\mathcal{D}}[\pi(w(P,R))=\mathrm{SV}]$.
\end{itemize}
These metrics abstract away temporal dependence and allow us to study intrinsic tradeoffs in expectation.

\paragraph{Pareto-optimal tradeoffs.}
For $\lambda_1,\lambda_2\ge0$, we study
\begin{align} \label{pareto_formulation}
\pi^\star_{(\lambda_1,\lambda_2)}\in\arg\min_{\pi}\;
&\Pr[\pi(w(P,R))=\mathrm{SV}]
\\ \nonumber
+ \lambda_1\,&\Pr[\pi(w(P,R))=\mathrm{A}\mid g(P,R)=0]
\\
+ \lambda_2\,&\Pr[\pi(w(P,R))=\mathrm{R}\mid g(P,R)=1]. \nonumber
\end{align}
Sweeping $(\lambda_1,\lambda_2)$ varies the relative penalty assigned to type-I errors, type-II errors, and strong-verification calls, thereby encoding different operating points (e.g., prioritize reliability vs.\ latency/cost). The resulting optimizers $\pi^\star_{(\lambda_1,\lambda_2)}$ trace the set of Pareto-optimal policies, revealing which performance levels are achievable.

\begin{assumption}[Calibration of the weak verifier]\label{ass:calibration}
The weak verification oracle $r$ is calibrated with respect to the strong verifier $g$ in the following sense: for all $p \in [0,1]$,
\[
\Pr\!\big[g(P,R)=1 \,\big|\, w(P,R)=p\big] = p,
\]
where $(P,R)$ is distributed as $P\sim \mathcal{D}$ and $R=f(P)$.
\end{assumption}

Assumption~\ref{ass:calibration} is a natural starting point: without some link between the weak score $w(P,R)$ and the correctness signal $g(P,R)$, the weak verifier could in principle be arbitrary and carry no actionable information. While we do \emph{not} rely on calibration in the next section, it provides a clean population model in which we can derive structural insights about how weak and strong verification should interact.

\begin{theorem}[Structure of an optimal policy]\label{thm:threshold_main}
Suppose Assumption~\ref{ass:calibration} holds. For any $\lambda_1,\lambda_2 \ge 0$, there exists an optimal policy
$\pi^\star(\lambda_1,\lambda_2)$ that has a threshold structure: there exist thresholds $t_{\mathrm{low}}, t_{\mathrm{high}}\in[0,1]$ such that
\[
\pi^\star(w)\in
\begin{cases}
\{\text{R}\}, & w < t_{\mathrm{low}},\\
\{\text{SV}\}, & t_{\mathrm{low}} \le w \le t_{\mathrm{high}},\\
\{\text{A}\}, & w > t_{\mathrm{high}}.
\end{cases}
\]
\end{theorem}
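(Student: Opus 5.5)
The plan is to rewrite the objective in \eqref{pareto_formulation} as the expectation of a pointwise cost depending only on the weak score, and then minimize that cost separately for each value of $w$. Let $W := w(P,R)$ with law $\mu$ on $[0,1]$, and set $\pi_0 := \Pr[g=0]$, $\pi_1 := \Pr[g=1]$. If $\pi_0=0$ or $\pi_1=0$, the corresponding conditional term is zero (or can be treated as such), and the argument below simplifies. Assume now $\pi_0,\pi_1>0$.

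Because the policy acts only on $W$, Assumption~\ref{ass:calibration} gives
\[
\Pr[\pi(W)=\mathrm{A}, g=0] = \mathbb{E}\big[\mathbf{1}\{\pi(W)=\mathrm{A}\}(1-W)\big],
\qquad
\Pr[\pi(W)=\mathrm{R}, g=1] = \mathbb{E}\big[\mathbf{1}\{\pi(W)=\mathrm{R}\}\,W\big].
\]
For randomized policies, replace the indicators by the action probabilities conditional on $W$. Dividing by $\pi_0$ and $\pi_1$ turns the objective into $\mathbb{E}[c_{\pi(W)}(W)]$, with pointwise costs
\[
c_{\mathrm{SV}}(p)=1,\qquad c_{\mathrm{A}}(p)=\frac{\lambda_1}{\pi_0}(1-p),\qquad c_{\mathrm{R}}(p)=\frac{\lambda_2}{\pi_1}\,p .
\]
The constants $\pi_0,\pi_1$ are properties of $\mathcal{D}$ and do not depend on $\pi$. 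The objective is therefore linear in the action chosen at each $p$. Any policy that picks $\arg\min_{a} c_a(p)$ for every $p$, with a fixed measurable tie-breaking rule, is optimal, since it minimizes the integrand pointwise and hence the integral. Measurability holds because the three costs are affine, so the minimizer regions are intervals.

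The main step is to show that the pointwise minimizer has the R/SV/A ordering. Since $c_{\mathrm{R}}$ is nondecreasing and $c_{\mathrm{A}}$ is nonincreasing in $p$:
\begin{itemize}
\item the set $\{p: c_{\mathrm{R}}(p)<\min(c_{\mathrm{A}}(p),1)\}$ is an initial segment $[0,t_{\mathrm{low}})$;
\item the set $\{p: c_{\mathrm{A}}(p)<\min(c_{\mathrm{R}}(p),1)\}$ is a final segment $(t_{\mathrm{high}},1]$.
\end{itemize}
Each is an intersection of sublevel sets of monotone affine functions.

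The obstacle is the case where the two segments would overlap or touch without an SV region between them. This happens when $c_{\mathrm{R}}$ and $c_{\mathrm{A}}$ cross below the level $1$, at $p^\ast = \frac{\lambda_1/\pi_0}{\lambda_1/\pi_0+\lambda_2/\pi_1}$. In that case $c_{\mathrm{R}}<c_{\mathrm{A}}$ exactly for $p<p^\ast$, so the R-region is $[0,p^\ast)$ and the A-region is $(p^\ast,1]$. Setting $t_{\mathrm{low}}=t_{\mathrm{high}}=p^\ast$ gives an SV region that is the single point $\{p^\ast\}$. Choosing SV there is suboptimal pointwise, but only if $\mu(\{p^\ast\})>0$ does this matter. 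In that case I will instead assign $p^\ast$ to R or A (both have cost below $1$) and shift the threshold by an infinitesimal amount. For example, take $t_{\mathrm{low}}$ slightly above $p^\ast$ with no mass in $(p^\ast,t_{\mathrm{low}}]$, or use the statement's allowance $t_{\mathrm{low}}\le w\le t_{\mathrm{high}}$ with an empty interval if $t_{\mathrm{low}}>t_{\mathrm{high}}$. If that atom issue cannot be absorbed, I will note that the $\in$ formulation permits the convention that the SV set may be empty.

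Otherwise the crossing occurs at cost at least $1$. Then every point is assigned as follows:
\begin{itemize}
\item to R when $p<\pi_1/\lambda_2$;
\item to A when $p>1-\pi_0/\lambda_1$;
\item to SV in between.
\end{itemize}
This gives $t_{\mathrm{low}}=\min(1,\pi_1/\lambda_2)$ and $t_{\mathrm{high}}=\max(0,1-\pi_0/\lambda_1)$, with $t_{\mathrm{low}}\le t_{\mathrm{high}}$ guaranteed by the no-crossing condition. The cases $\lambda_i=0$ are covered by the conventions $\pi_1/0=\infty$ and $\pi_0/0=\infty$, clipped to $[0,1]$.
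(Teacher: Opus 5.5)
Your reduction to pointwise costs and your no-crossing case are exactly the paper's proof. With $a=\lambda_1/\alpha_0$ and $b=\lambda_2/\alpha_1$, the paper sets $t_{\mathrm{low}}=1/b$ and $t_{\mathrm{high}}=1-1/a$, and verifies the three regions only when $t_{\mathrm{low}}\le t_{\mathrm{high}}$, i.e.\ $1/a+1/b\le 1$ (your ``crossing at cost at least $1$''). Apart from declaring $\pi^\star\equiv\mathrm{A}$ or $\pi^\star\equiv\mathrm{R}$ when $a=0$ or $b=0$, it says nothing about the regime $1/a+1/b>1$.

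You correctly identify that regime as the obstacle, but your fixes for it do not go through:
\begin{itemize}
\item The infinitesimal shift needs some $t>p^\ast$ with $\mu((p^\ast,t])=0$ (or the mirror condition). This fails whenever $\mu$ has an atom at $p^\ast$ and mass arbitrarily close to $p^\ast$ on both sides.
\item Choosing $t_{\mathrm{low}}>t_{\mathrm{high}}$ does not empty the SV region. It makes the statement demand both $\pi^\star(w)\in\{\mathrm{R}\}$ and $\pi^\star(w)\in\{\mathrm{A}\}$ for $w\in(t_{\mathrm{high}},t_{\mathrm{low}})$.
\item The $\in$ notation gives no escape, because $t_{\mathrm{low}}\in[t_{\mathrm{low}},t_{\mathrm{high}}]$ once the thresholds are ordered, so the SV region is never empty.
\item Your last sentence has the same problem. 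When $\lambda_1=0<\lambda_2$, $c_{\mathrm{A}}\equiv 0$ crosses $c_{\mathrm{R}}$ at $p^\ast=0$ with value $0<1$, so this is a crossing case. Your clipped formulas give $t_{\mathrm{high}}=0<t_{\mathrm{low}}$, contradicting the ordering you assert; symmetrically for $\lambda_2=0<\lambda_1$.
\end{itemize}

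No cleverer shift can repair this, because the literal statement fails in that regime. Take $W\sim\tfrac12\delta_{1/2}+\tfrac12\,\mathrm{Unif}[0,1]$ and $g\mid W\sim\mathrm{Bernoulli}(W)$. This is calibrated with $\alpha_0=\alpha_1=\tfrac12$. Set $\lambda_1=\lambda_2=\tfrac12$, so the pointwise costs are $1$, $1-w$, $w$ and SV is strictly suboptimal at every $w$. Any nonempty SV interval falls into one of three cases:
\begin{itemize}
\item it contains the atom, giving excess cost at least $\tfrac14$;
\item it has positive length, giving positive excess cost;
\item it is a single point $c\neq\tfrac12$, in which case the interval between $\tfrac12$ and $c$ gets the strictly worse action, with excess $\tfrac12(c-\tfrac12)^2$.
\end{itemize}
So no policy of the stated form is optimal.

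The correct conclusion in the crossing regime is a degenerate two-region policy: R on $[0,p^\ast)$, A on $(p^\ast,1]$, and $p^\ast$ sent to either. The theorem therefore has to be amended to allow an empty SV region. With that amendment, your crossing-case analysis (dropping the shift, and treating $\lambda_i=0$ as crossings at $p^\ast\in\{0,1\}$) together with your no-crossing analysis gives a complete proof. That covers more than the paper's proof does.
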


Theorem~\ref{thm:threshold_main} aligns with the following intuition: when the weak verifier is sufficiently confident that a response is incorrect (small $w$), it is optimal to reject; when it is sufficiently confident that a response is correct (large $w$), it is optimal to accept; and when the weak signal is ambiguous (intermediate $w$), it is optimal to defer to strong verification. In particular, calibration makes the weak score directly interpretable as a probability of correctness, which in turn makes it easy to translate into A/R/SV decisions.

However, calibration is not the only relevant property of a weak verifier. Even a calibrated weak verifier can be unhelpful if it is \emph{uninformative}. For example, a verifier that always outputs the marginal probability $\Pr[g(P,R)=1]$ is perfectly calibrated, yet carries no instance-specific information and is therefore uninformative.

\textbf{What properties of the weak verifier matter?} 

The next proposition characterizes the value of the optimal objective and makes precise how the distribution of weak scores governs the achievable tradeoff.

\begin{proposition}[Value of the optimal objective]\label{prop:value_main}
Assume Assumption~\ref{ass:calibration}. Let $W:=w(P,R)\in[0,1]$, $\alpha_0:=\Pr[g(P,R)=0]$ and $\alpha_1:=\Pr[g(P,R)=1]$.
Then the minimum value of the objective in \eqref{pareto_formulation} can be written as
\[
V(\lambda_1,\lambda_2)
=
\mathbb{E}\!\left[\min\Big\{\,1,\;
\frac{\lambda_1}{\alpha_0}(1-W),\;
\frac{\lambda_2}{\alpha_1}W
\Big\}\right].
\]
\end{proposition}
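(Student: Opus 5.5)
The plan is to rewrite the objective in \eqref{pareto_formulation} as a single expectation over the weak score $W$, after which the minimization decouples pointwise in $W$. A randomized policy is described by measurable functions $a(w), r(w), s(w)\ge 0$ with $a+r+s=1$, the probabilities of choosing A, R and SV when $W=w$. By Bayes' rule,
\[
\Pr[\pi(W)=\mathrm{A}\mid g=0] = \frac{1}{\alpha_0}\,\mathbb{E}\big[a(W)\,\mathbf{1}\{g=0\}\big] = \frac{1}{\alpha_0}\,\mathbb{E}\big[a(W)\,\Pr[g=0\mid W]\big],
\]
where the second equality uses that the action depends on $(P,R)$ only through $W$ (and independent randomization), so we may condition on $W$. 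Under Assumption~\ref{ass:calibration}, $\Pr[g=0\mid W]=1-W$. Similarly,
\[
\Pr[\pi(W)=\mathrm{R}\mid g=1]=\frac{1}{\alpha_1}\,\mathbb{E}\big[r(W)\,W\big], \qquad \Pr[\pi(W)=\mathrm{SV}]=\mathbb{E}[s(W)].
\]
The objective therefore equals
\[
\mathbb{E}\Big[s(W)\cdot 1 + a(W)\,\tfrac{\lambda_1}{\alpha_0}(1-W) + r(W)\,\tfrac{\lambda_2}{\alpha_1}W\Big].
\]

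For the lower bound, the integrand is, for each fixed $w$, a convex combination of the three costs $1$, $\tfrac{\lambda_1}{\alpha_0}(1-w)$ and $\tfrac{\lambda_2}{\alpha_1}w$. It is therefore at least their minimum, and taking expectations shows that every policy has objective at least $V(\lambda_1,\lambda_2)$.

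For achievability, define the deterministic policy that at each $w$ picks the action attaining the minimum, breaking ties by a fixed order. This choice is measurable because it is determined by comparisons of affine functions of $w$. The policy attains the bound with equality, so it is optimal and the minimum equals $V$. This is consistent with Theorem~\ref{thm:threshold_main}: the costs of R and A are affine (increasing and decreasing respectively) while SV has constant cost, so the minimizing regions form the R/SV/A intervals. We could equally take the threshold policy from that theorem and evaluate it, but the direct pointwise argument is cleaner.

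The main care point is the conditioning step. We must justify replacing $\mathbf{1}\{g=0\}$ by $1-W$ inside the expectation, which requires that the policy's randomization be independent of $g$ given $W$; this holds because $\pi$ sees only $W$. We also need to handle the degenerate cases $\alpha_0=0$ or $\alpha_1=0$, where the conditional probabilities are undefined. We would exclude these by assumption or adopt a convention: for instance, if $\alpha_0=0$ then $W=1$ almost surely by calibration and the corresponding term vanishes. Otherwise the argument is routine.
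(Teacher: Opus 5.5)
Your proposal is correct and follows essentially the same route as the paper. Both arguments rewrite the conditional error terms as joint probabilities divided by $\alpha_0,\alpha_1$, condition on $W$ and invoke calibration to obtain the pointwise cost $\min\{1,\frac{\lambda_1}{\alpha_0}(1-w),\frac{\lambda_2}{\alpha_1}w\}$, and then observe that a randomized choice is a convex combination of these costs, so a deterministic pointwise argmin is optimal. You are somewhat more careful than the paper in justifying that the policy's randomization is independent of $g$ given $W$ and in flagging the degenerate cases $\alpha_0=0$ or $\alpha_1=0$, which the paper simply excludes by assuming $\alpha_0,\alpha_1>0$.
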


\noindent\emph{Interpretation.}
The three terms inside the min correspond to the three actions at a given weak score $W$:
\emph{(i)} calling strong verification costs $1$;
\emph{(ii)} accepting without strong verification risks a type-I mistake, which under calibration occurs with probability $1-W$
and is weighted by $\lambda_1$ (after normalizing by how often $g=0$ occurs, via $\alpha_0$);
\emph{(iii)} rejecting without strong verification risks a type-II mistake, which under calibration occurs with probability $W$
and is weighted by $\lambda_2$ (normalized by $\alpha_1$).
Thus the optimal policy, pointwise in $W$, chooses the cheapest of these three options, and the optimal value is the
expected cost of that choice.

This expression makes clear that, beyond calibration, the \emph{sharpness} of the weak verifier plays a crucial role. Here, sharpness refers to how often $W$ takes values near $0$ or $1$, i.e., confidently accepts or rejects. If $W$ is often close to $0$, then rejection is cheap because the type-II risk $W$ is small; if $W$ is often close to $1$, then
acceptance is cheap because the type-I risk $1-W$ is small. In both cases, the policy can avoid strong verification while keeping
errors small. In contrast, if $W$ is typically ambiguous (e.g., concentrated near $1/2$), then both risks $W$ and $1-W$ are non-negligible,
so the minimum is typically close to $1$, forcing the policy to rely more frequently on strong verification.

\begin{center}
\noindent\fbox{\parbox{0.92\linewidth}{
\textbf{Takeaway.} Two complementary properties govern the usefulness of a weak verifier: \emph{calibration} makes its scores interpretable as correctness probabilities, while \emph{sharpness} makes it effective by producing decisive scores near $0$ or $1$.
}}
\end{center}

This message may be useful for practitioners designing and evaluating scalable weak verifiers, although we do not revisit it further in this paper. In contrast, the threshold structure of Theorem~\ref{thm:threshold_main} will serve as a key structural cornerstone for the algorithm developed in the next section.

\section{Selective Strong Verification and Guarantees}\label{sec:practical}

In this section, we design Selective Strong Verification (SSV), an algorithm for the sequential problem introduced in Section~\ref{sec:prob_def}, fixing a language model $f$, a weak verifier $w$, and a strong verifier $g$. Building on the two-threshold structure identified in Section~\ref{sec:population}, we construct a policy that adaptively decides, based on the weak score $w_t$, whether to accept, reject, or query strong verification. We make no assumptions on the query stream, the verification signals, or the behavior of the language model. Our goal is to control the empirical type-I and type-II errors $\mathrm{Err}_{\mathrm{I}}(T)$ and $\mathrm{Err}_{\mathrm{II}}(T)$, defined as in Section~\ref{sec:prob_def}, by enforcing these constraints \emph{uniformly over time}, in the sense that for every $T$ and user-defined $\alpha$ and $\beta$,
\[
\mathrm{Err}_{\mathrm{I}}(T)\le \alpha
\qquad\text{and}\qquad
\mathrm{Err}_{\mathrm{II}}(T)\le \beta.
\]
This goal is achieved by the end of this section: Theorem~\ref{thm:finite_main} establishes that the proposed algorithm enjoys distribution-free, uniform-in-time control of type-I and type-II errors up to finite-sample slack terms.

\begin{algorithm}[t]
\caption{Selective Strong Verification (SSV)}
\label{alg:adaptive}
\footnotesize
\begin{algorithmic}
\REQUIRE Targets $\alpha,\beta\in(0,1)$;
exploration probabilities $\{q_A^t,q_R^t\}_{t\ge1}$;
step sizes $\{\eta_t\}_{t\ge1}$;
initial thresholds $\tau_R^1\le\tau_A^1$

\FOR{$t = 1,2,\dots$}
    \STATE Receive query and response $(P_t,R_t)$
    \STATE $w_t \gets w(P_t,R_t)$ \COMMENT{weak verification score}

    \STATE \hrulefill\ \textit{Three regions}\ \hrulefill

    \IF{$w_t > \tau_A^t$}
        \STATE $a_t \gets \text{A}$ w.p.\ $1-q_A^t$, else $a_t \gets \text{SV}$. \quad \text{Also, set}\, $q_t = q_A^t$. 
    \ELSIF{$w_t < \tau_R^t$}
        \STATE $a_t \gets \text{R}$ w.p.\ $1-q_R^t$, else $a_t \gets \text{SV}$. \quad \text{Also, set}\, $q_t = q_R^t$. 
    \ELSE
        \STATE $a_t \gets \text{SV}$. \quad \text{Also, set}\, $q_t = 1$. 
    \ENDIF

    \STATE \hrulefill\ \textit{Final decision and feedback}\ \hrulefill

    \IF{$a_t=\text{A}$}
        \STATE \textcolor{green}{\textbf{accept}} $R_t$; \textbf{continue}
    \ELSIF{$a_t=\text{R}$}
        \STATE \textcolor{red}{\textbf{reject}} $R_t$; \textbf{continue}
    \ELSE
        \STATE Observe $g_t \gets g(P_t,R_t)$ \COMMENT{strong verification}
        \IF{$g_t=1$}
            \STATE \textcolor{green}{\textbf{accept}} $R_t$
        \ELSE
            \STATE \textcolor{red}{\textbf{reject}} $R_t$
        \ENDIF

        \STATE \hrulefill\ \textit{Threshold updates}\ \hrulefill

        \STATE $\tau_A^{t+1} \gets 
        \max\!\left\{
        \tau_R^t,\;
        \tau_A^t + \eta_t
        \frac{\mathbf{1}\{g_t=0\}\big(\mathbf{1}\{w_t>\tau_A^t\}-\alpha\big)}{q_t}
        \right\}$
        \STATE $\tau_R^{t+1} \gets 
        \min\!\left\{
        \tau_A^{t+1},\;
        \tau_R^t + \eta_t
        \frac{\mathbf{1}\{g_t=1\}\big(\beta-\mathbf{1}\{w_t<\tau_R^t\}\big)}{q_t}
        \right\}$
    \ENDIF

    \IF{$a_t\neq\text{SV}$}
        \STATE $\tau_R^{t+1}\gets\tau_R^t$, $\tau_A^{t+1}\gets\tau_A^t$
    \ENDIF
\ENDFOR
\end{algorithmic}
\end{algorithm}

\textbf{Algorithm design.} Algorithm~\ref{alg:adaptive} induces a policy via two adaptive thresholds $(\tau_R^t, \tau_A^t)$ and exploration probabilities $(q_A^t, q_R^t)$. Looking at the \textit{Three regions} step of Algorithm~\ref{alg:adaptive}, at time $t$, the weak score $w_t$ places the response into one of three regions: an \emph{accept region} ($w_t > \tau_A^t$), a \emph{reject region} ($w_t < \tau_R^t$), or an \emph{uncertain region} (between the thresholds). In the uncertain region, the policy always queries the strong verifier. In the accept and reject regions, the response is accepted or rejected, without observing the strong signal $g_t$, with high probability (think of $(q_A^t, q_R^t)$ as small). With a small probability, the policy instead queries the strong verifier and follows its outcome. We explain the necessity of this randomized exploration later.

\textbf{Threshold updates.}
As is clear from the \textit{Threshold update} step of Algorithm~\ref{alg:adaptive}, the thresholds are updated only at rounds where the strong signal $g_t$ is observed. Consider the update of the accept threshold,
\[
\tau_A^{t+1} \gets 
\max\!\left\{
\tau_R^t,
\tau_A^t + \eta_t
\frac{\mathbf{1}\{g_t=0\}\big(\mathbf{1}\{w_t>\tau_A^t\}-\alpha\big)}{q_t}
\right\}.
\]
The outer $\max\{\cdot\}$ acts as a projection step that enforces the ordering constraint $\tau_R^t \le \tau_A^t$ at all times and prevents the thresholds from crossing. The second term inside the maximum is responsible for error tracking: since $\tau_A^t$ controls incorrect acceptances, the update is active only when $g_t=0$, corresponding to a potential type-I error. The term $\mathbf{1}\{w_t>\tau_A^t\}-\alpha$ drives the empirical rate of accepting among such rounds toward the target level $\alpha$. Finally, the division by $q_t$ corrects for the fact that strong feedback is observed at random times with probabilities that depend on the weak signal, yielding an unbiased update via standard importance weighting. The update for $\tau_R^{t+1}$ is defined analogously to control type-II error.


\paragraph{Why randomized strong verification is necessary.}
The error quantities above are defined \emph{conditionally on the strong verifier}. For example, the type-I error measures the rate of acceptances among times when $g_t=0$. If the policy deterministically accepts or rejects without querying the strong verifier, then $g_t$ remains unobserved, and the algorithm receives no direct feedback about whether that decision contributed to type-I or type-II error. This way, the interaction may enter a regime where (for instance) $w_t>\tau_A^t$ almost always, leading to many acceptances but essentially no strong-verification feedback. In such regimes, violations of the error constraints cannot be detected or corrected without additional assumptions. Algorithm~\ref{alg:adaptive} avoids this issue by querying the strong verifier with small probabilities $q_A^t$ and $q_R^t$ even in decisive regions.

The following theorem shows that Algorithm~\ref{alg:adaptive} controls the empirical errors for any $T$ (up to finite-sample slack terms), without any assumptions on the stream $\{(P_t,R_t)\}_{t\ge1}$.

\begin{theorem}[Finite-time empirical error control]\label{thm:finite_main}
Fix a horizon $T \ge 1$. Run Algorithm~\ref{alg:adaptive} with a constant step size $\eta_t = \eta > 0$ ($\forall, t$) and predictable strong-query probabilities
$\{q_t\}_{t=1}^T$ satisfying $q_t\in(0,1]$ and $q_{\min} := \min_{t \le T} \big\{ q_A^t,\; q_R^t \big\} > 0$.
Assume the initial thresholds satisfy $\tau_R^1\le \tau_A^1$ with $\tau_R^1,\tau_A^1\in[0,1]$.
Then for any $\delta\in(0,1)$, with probability at least $1-\delta$ over the algorithm's internal randomness,
\begin{align*}
\mathrm{Err}_{\mathrm{I}}(T)
&\le \alpha + \Delta\!\big(N_0(T),\delta\big),\\
\mathrm{Err}_{\mathrm{II}}(T)
&\le \beta + \Delta\!\big(N_1(T),\delta\big),
\end{align*}
where
\[
\Delta(N,\delta)
:=
\frac{1+\frac{2\eta}{q_{\min}}}{\eta N}
+
\sqrt{\frac{2\log(4/\delta)}{N q_{\min}}}
+
\frac{\log(4/\delta)}{3N q_{\min}},
\]
with the convention $\Delta(0,\delta)=0$.
\end{theorem}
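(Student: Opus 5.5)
The plan is the standard online-quantile-tracking argument (as in adaptive conformal inference), with an importance-weighted martingale correction for the partial feedback. We treat type-I; type-II is symmetric, and a union bound with $\delta/2$ each gives the $\log(4/\delta)$ terms.

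\textbf{Step 1: the thresholds stay bounded.} We first show by induction that $\tau_A^t \in [-c, 1 + c]$, where $c = 2\eta/q_{\min}$ (possibly a slightly different constant). Each update moves $\tau_A$ by at most $\eta/q_t \le \eta/q_{\min}$ plus a projection. Suppose $\tau_A^t > 1$. Then $w_t > \tau_A^t$ is impossible, so any $g_t = 0$ update has the sign of $-\alpha$ and pushes $\tau_A$ down. The $\max$ with $\tau_R^t$ can only raise $\tau_A$ up to $\tau_R^t$. So we need a joint invariant for the pair. Below $0$, every $w_t \ge 0 > \tau_A^t$ lies in the accept region, and the increment $(1-\alpha)\eta/q_t$ is nonnegative. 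For $\tau_R$ the analogous argument applies, with the $\min$ only pulling $\tau_R$ down. Since $\tau_R \le \tau_A$, the projections can push $\tau_A$ upward only to a value where $\tau_R$ already lives, so the joint range is bounded by $[-\eta/q_{\min},\, 1 + \eta/q_{\min}]$. That gives a total range of at most $1 + 2\eta/q_{\min}$.

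\textbf{Step 2: telescoping.} Define the unprojected increments
\[
Z_t = \frac{\mathbf{1}\{a_t = \mathrm{SV}\}\,\mathbf{1}\{g_t = 0\}\bigl(\mathbf{1}\{w_t > \tau_A^t\} - \alpha\bigr)}{q_t},
\]
which are zero when there is no strong query. The projection $\max\{\tau_R^t, \cdot\}$ only increases $\tau_A$, so $\tau_A^{T+1} - \tau_A^1 \ge \eta \sum_t Z_t$. Hence
\[
\sum_t Z_t \le \frac{1 + 2\eta/q_{\min}}{\eta}.
\]

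\textbf{Step 3: martingale correction.} Let $X_t = \mathbf{1}\{g_t = 0\}\bigl(\mathbf{1}\{w_t > \tau_A^t\} - \alpha\bigr)$, the full-information increment. The quantities $g_t$, $w_t$, $\tau_A^t$ and $q_t$ are all determined before the exploration coin is flipped, where $g_t$ is fixed though unobserved. Therefore $\mathbb{E}[Z_t \mid \mathcal{F}_{t-1}, P_t, R_t] = X_t$, since $\Pr[a_t = \mathrm{SV}] = q_t$. It follows that $M_T = \sum_t (X_t - Z_t)$ is a martingale.

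Its increments are nonzero only when $g_t = 0$, are bounded by $1/q_{\min}$, and have conditional variance at most $1/q_t \le 1/q_{\min}$ on those rounds. The total variance is therefore at most $N_0(T)/q_{\min}$. Freedman's (Bernstein-type) inequality then gives, with probability at least $1 - \delta/2$,
\[
M_T \le \sqrt{\frac{2 N_0 \log(4/\delta)}{q_{\min}}} + \frac{\log(4/\delta)}{3 q_{\min}}.
\]

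\textbf{Step 4: conclude.} On $g_t = 0$, a weak acceptance occurs only if $w_t > \tau_A^t$. So
\[
\sum_t \mathbf{1}\{g_t = 0,\ a_t = \mathrm{A}\} \le \sum_t \mathbf{1}\{g_t = 0,\ w_t > \tau_A^t\} = \sum_t X_t + \alpha N_0 .
\]
Combining with Steps 2 and 3 and dividing by $N_0$ yields $\alpha + \Delta(N_0, \delta)$.

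\textbf{Main obstacle.} The main obstacle is the Freedman step. $N_0(T)$ is random and adaptively determined, so the variance bound must be stated in terms of a random quantity. We would first try a Freedman form with a fixed variance proxy and peel or union over the possible values of $N_0$. If that costs extra log factors, we would instead note that the variance can be bounded with predictable indicators: $g_t$ is determined before the coin, so $\mathbf{1}\{g_t = 0\}/q_t$ is predictable relative to the coin filtration. A careful choice of filtration, conditioning on the realized stream, then removes the issue. A secondary difficulty is making the Step 1 invariant rigorous under the coupled projections.
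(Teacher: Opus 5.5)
Your Steps 1--4 follow the paper's proof essentially step for step. The paper likewise:
\begin{itemize}
\item confines both thresholds to $[-\eta/q_{\min},\,1+\eta/q_{\min}]$ by a one-raw-step-overshoot argument;
\item telescopes $\tau_A^{t+1}\ge\tau_A^t+\eta\,\widehat e_t^A$, using that the $\max$-projection only raises $\tau_A$;
\item controls the importance-weighting martingale by Freedman, with increments at most $1/q_{\min}$ and conditional variance $\mathbf{1}\{g_t=0\}(x_t^A)^2(1-q_t)/q_t\le \mathbf{1}\{g_t=0\}/q_{\min}$;
\item dominates the policy error by the threshold-induced error, and takes a union bound over the two error types.
\end{itemize}
Your joint induction for the coupled projections is, if anything, more explicit than the paper's ``the same reasoning applies.''

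The obstacle you flag in Step 3 is real, and the paper's proof does not resolve it either. It inserts the random proxy $N_0(T)/q_{\min}$ directly into Freedman's inequality, which only bounds $\Pr(M_T\ge x,\ V_T\le\sigma^2)$ for a deterministic $\sigma^2$.

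Your proposed remedy, however, fails in the paper's model. Conditioning on the realized stream is legitimate only when the stream is oblivious to the exploration coins. The paper explicitly allows future queries to depend on past accept/reject and verification outcomes (resample after a rejection, move on after an acceptance), and these outcomes reveal the coins $O_s$. Conditionally on such a stream the $O_s$ are no longer $\mathrm{Bernoulli}(q_s)$. The identity $\mathbb{E}[O_t/q_t\mid\cdot\,]=1$ is then lost, and with it the martingale property.

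Predictability of $\mathbf{1}\{g_t=0\}/q_t$ does not help. It makes each variance increment predictable, which standard Freedman already uses, but it does not make the total deterministic. An adaptive stream could stop issuing $g_t=0$ rounds precisely when the deviation happens to be large.

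A rigorous version must pay for uniformity in $N_0$:
\begin{itemize}
\item For each $n\le T$, apply Freedman to the martingale stopped at the $n$-th round with $g_t=0$. Its predictable variance is at most $n/q_{\min}$ deterministically, and it equals $M_T$ on $\{N_0(T)=n\}$.
\item Union-bound over $n$. This replaces $\log(4/\delta)$ by $\log(4T/\delta)$, or by an iterated-logarithm term via geometric peeling.
\end{itemize}
Alternatively, use the deterministic proxy $T/q_{\min}$ at the cost of a worse rate. Or restrict the statement to oblivious streams, where your conditioning argument is valid.
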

The error bound decomposes into two qualitatively different sources.
The first term, of order $1/(\eta N)$, reflects the intrinsic error of tracking a target quantile with an online threshold update:
even in an idealized setting where the strong signal were always revealed after each decision, this term would remain.
It is the same fundamental limitation that appears in online conformal prediction and other quantile-tracking procedures \citep{gibbs2021adaptiveconformalinferencedistribution, angelopoulos2023conformalpidcontroltime, ramalingam2025relationship}.
The remaining two terms arise from partial access to the strong signal and the randomized exploration required to obtain unbiased feedback.
Their dependence on $q_{\min}$ is insightful: larger exploration probabilities make error control easier,
but increase the frequency of strong verification calls. This highlights an inherent algorithmic trade-off induced by our design---balancing statistical accuracy against verification cost---and
suggests that, in practice, the exploration probabilities should be treated as tunable hyperparameters to achieve the best performance.

\section{Experimental Results}
\label{sec:experiments}

\begin{figure*}[ht]
    \centering
    \begin{subfigure}[b]{0.31\textwidth}
        \centering
        \includegraphics[width=\linewidth]{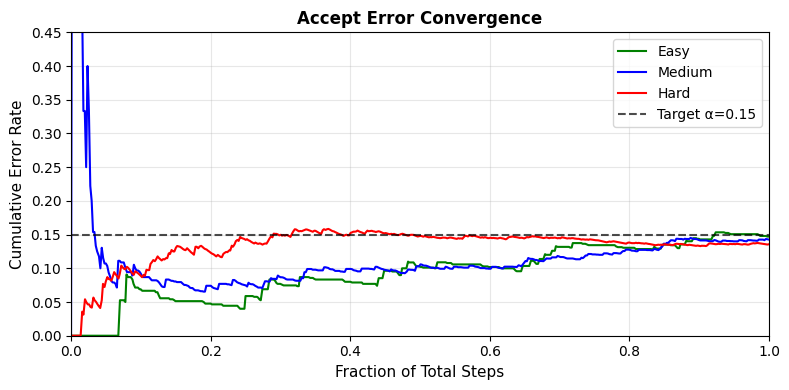}
        \captionsetup{margin={2em,0pt}} 
        \caption{MATH: Type-I Error (Accept)}
        
    \end{subfigure}
    \hfill
    \begin{subfigure}[b]{0.31\textwidth}
        \centering
        \includegraphics[width=\linewidth]{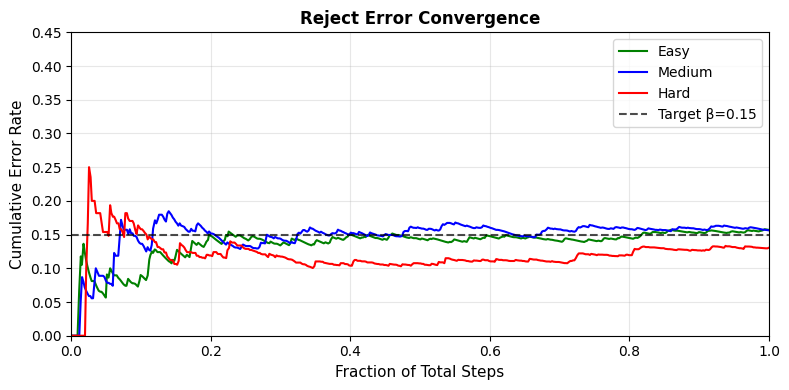}
        \captionsetup{margin={2em,0pt}} 
        \caption{MATH: Type-II Error (Reject)}
    \end{subfigure}
    \hfill
    \vrule
    \hfill
    \begin{subfigure}[b]{0.31\textwidth}
        \centering
        \includegraphics[width=\linewidth]{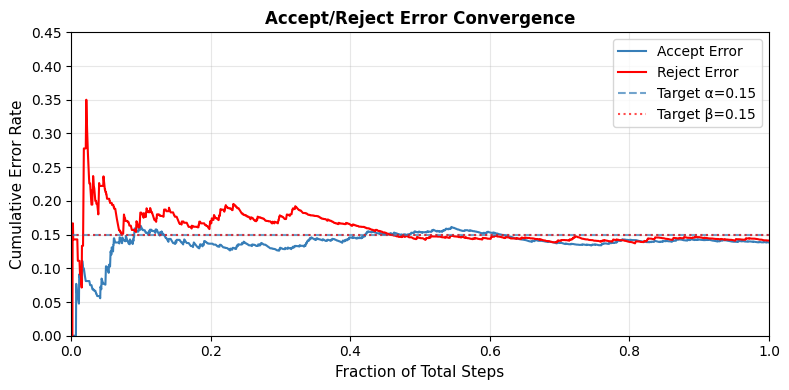}
        \captionsetup{margin={2.5em,0pt}} 
        \caption{Sudoku: Combined Errors}
    \end{subfigure}

    \vspace{8pt}
    \caption{\textbf{Empirical Error Rate Convergence.} Running-average error rates $\frac{1}{T}\sum_{t=1}^T \mathrm{err}_t$ for target levels $\alpha=\beta=0.15$. \textbf{Left and Center (MATH):} Convergence of Type-I and Type-II errors across three difficulty levels in the outcome level verification task. \textbf{Right (Sudoku):} Convergence in the sequential step-by-step reasoning task.}
    \label{fig:combined-convergence}
\end{figure*}


In this section, we empirically evaluate SSV (Algorithm~\ref{alg:adaptive}) across two distinct reasoning paradigms, demonstrating the generality of the problem formulation in Section~\ref{sec:prob_def}.

The first scenario considers outcome-level verification, consistent with the Outcome Reward Modeling (ORM) paradigm \citep{cobbe2021trainingverifierssolvemath}. Here, the system evaluates a complete candidate solution after it has been fully generated. We utilize the MATH dataset \citep{hendrycks2021measuringmathematicalproblemsolving}, a rigorous benchmark for mathematical problem-solving. Following the notation in Section~\ref{sec:prob_def}, a query $P_t$ represents a user prompt, and $R_t$ is a complete response candidate. The model generates responses sequentially; the policy $\pi_t$ evaluates each via its weak score $w_t$ until a candidate is accepted or the budget $n$ is exhausted. To test SSV across varying complexities, we conduct experiments on problems from difficulty levels 2, 3, and 5.

The second scenario focuses on step-by-step verification, mirroring Process Reward Modeling (PRM) \citep{lightman2023letsverifystepstep}. We evaluate this using Sudoku puzzles \citep{shahab2023minisudoku}, where the focus shifts to granular, sequential decision-making. In this task, $P_t$ represents the current board state (initial puzzle plus all accepted digits), and $R_t$ is the model's proposed next digit and its coordinates. Sudoku is a high-stakes environment where a single incorrect step often renders the entire puzzle unsolvable, testing the policy's ability to intercept errors at the step level while minimizing calls to the costly strong verifier. 

We provide specific implementation details for the weak and strong verifiers used in each task in Appendix~\ref{sec:app-weak-strong-verifier-details}.

Our goal is to show that across both tasks, SSV (Algorithm~\ref{alg:adaptive}) successfully controls empirical Type-I and Type-II errors at their nominal levels. Furthermore, we show that our approach allows a service provider to principledly interpolate between two operational extremes. On one end is the \textit{Weak-Only regime}, where the system relies solely on the weak verifier. This represents the absolute lower bound for computation overhead and latency caused by using a strong verification as it bypasses strong verifier entirely; however this efficiency comes at the cost of reasoning accuracy and user trust. On the opposite end is the \textit{Strong-Only regime}, where every query is routed to the strong verifier. while this maximizes reasoning performance and reliability, it incurs the highest possible operational costs and latency. We demonstrate that SSV can identify a favorable balance, maintaining high reasoning performance while significantly reducing the load on the strong verifier.


\paragraph{Experimental Setup and Metrics.}
We report results along two primary axes. First, we measure empirical Type-I and Type-II error rates (Section~\ref{sec:prob_def}) to validate that our algorithm controls these quantities under non-stationary conditions. Second, we assess the reasoning performance vs.\ verification-cost (measured by the strong-verification frequency $\mathrm{SV}(T)$) trade-off by sweeping the target error levels $(\alpha, \beta)$. 

\textbf{Baselines.} We compare our policy against two baselines that represent the theoretical and practical boundaries of the system. The \textbf{Strong-Only (Oracle)} baseline invokes the strong verifier for every query, representing the maximum achievable reasoning accuracy and the upper bound for operational cost. Conversely, the \textbf{Weak-Only (Greedy)} baseline represents the minimum-cost regime; here, the system generates $n$ candidates and selects the one with the highest weak verification score to be accepted as the final result or the next reasoning step, bypassing the strong verifier entirely.

\subsection{Empirical Error Control}
We first evaluate the ability of SSV (Algorithm~\ref{alg:adaptive}) to maintain target error levels. Figure~\ref{fig:combined-convergence} displays the \textit{running average} of the Type-I and Type-II error rates as the interaction sequence progresses, defined at time step $T$ as $\frac{1}{T}\sum_{t=1}^T \text{err}_t$. Here, the index $t$ iterates over the global stream of verification decisions across all problems in each dataset.

Across both the mathematical reasoning tasks (Fig.~\ref{fig:combined-convergence}a, b) and the sequential Sudoku steps (Fig.~\ref{fig:combined-convergence}c), the running average errors stabilize near the nominal targets of $\alpha = \beta = 0.15$. This confirms the result of Theorem~\ref{thm:finite_main}. The trajectories reflect the typical behavior of online quantile tracking, where the learning rate $\eta$ governs the trade-off between adaptation speed and stability.\footnote{Larger values of $\eta$ facilitate faster initial convergence but result in noisier trajectories, while smaller values produce smoother curves with slower convergence.}





\begin{figure*}[ht]
    \centering
    \begin{subfigure}[b]{0.24\textwidth}
        \centering
        \includegraphics[height=3.4cm]{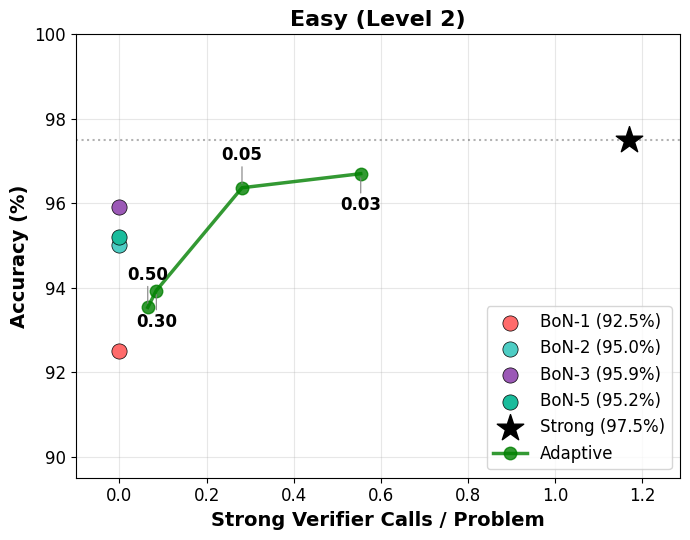}
        \captionsetup{margin={3em,0pt}} 
        \caption{MATH (Level 2)}
    \end{subfigure}
    \hfill
    \begin{subfigure}[b]{0.24\textwidth}
        \centering
        \includegraphics[height=3.4cm]{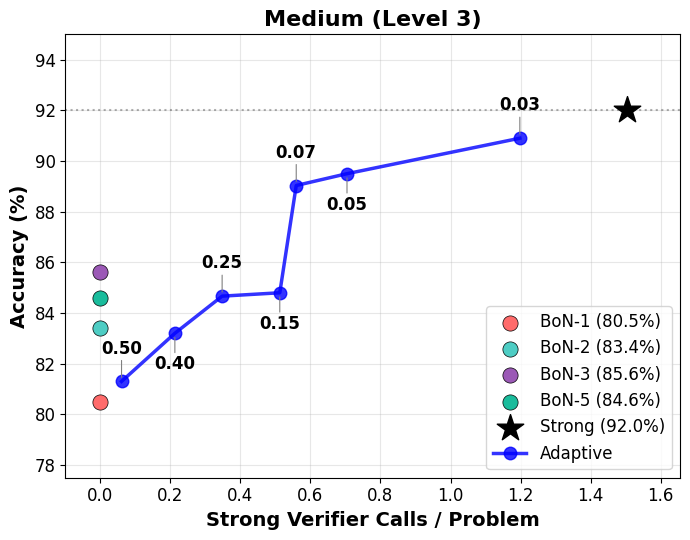}
        \captionsetup{margin={3em,0pt}}
        \caption{MATH (Level 3)}
    \end{subfigure}
    \hfill
    \begin{subfigure}[b]{0.24\textwidth}
        \centering
        \includegraphics[height=3.4cm]{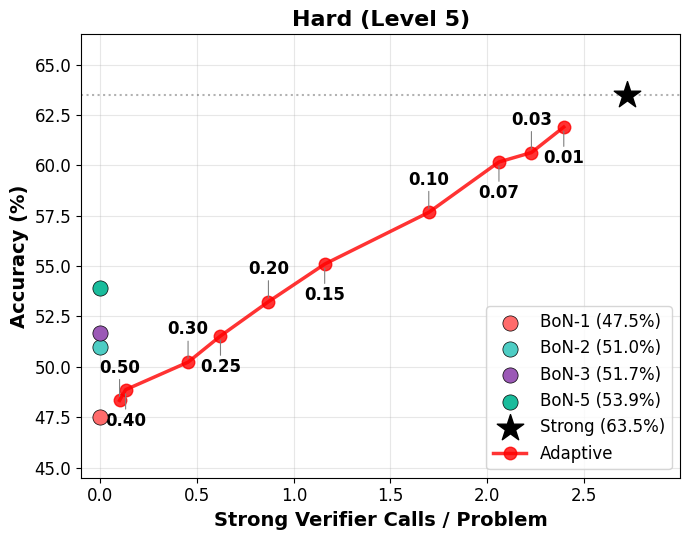}
        \captionsetup{margin={3em,0pt}}
        \caption{MATH (Level 5)}
    \end{subfigure}
    \hspace{0.2cm}\vrule
    \begin{subfigure}[b]{0.24\textwidth}
        \centering
        \includegraphics[height=3.4cm]{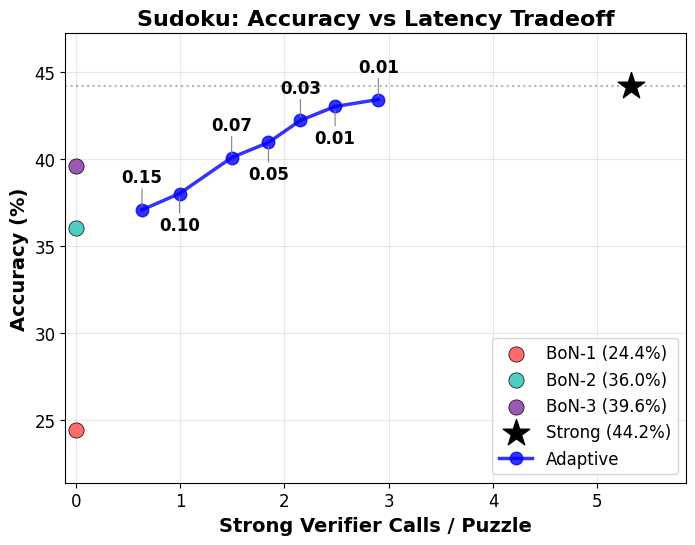}
        \captionsetup{margin={4em,0pt}}
        \caption{Sudoku}
    \end{subfigure}

    \vspace{8pt}
    \caption{\textbf{Reasoning Accuracy vs. Verification Cost Tradeoffs.} SSV (Algorithm~\ref{alg:adaptive}) (Adaptive: solid colored lines) interpolates between the \textbf{Strong-Only Oracle} (black star) and the \textbf{Weak-Only baselines} (colored circles). \textbf{Left (MATH):} Tradeoff curves for Easy, Medium, and Hard problems;  \textbf{Right (Sudoku):} Tradeoff curve for Sudoku step-by-step reasoning task. points are labeled with nominal error targets where $\alpha = \beta$.}
    \label{fig:combined-tradeoffs}
\end{figure*}

\subsection{Reasoning Performance vs. Verification Cost}
To characterize our framework's operational potential, we sweep the error targets $(\alpha, \beta)$ and plot reasoning accuracy against the average number of strong-verifier calls per problem. This visualization reveals the Pareto frontier of the accuracy-verification-cost trade-off, spanning the zero-cost Weak-Only baseline to the high-reliability Strong-Only Oracle. SSV serves as a principled mechanism to navigate and realize this frontier, allowing service providers to interpolate between these regimes by desired reliability bounds.

The intrinsic efficiency of the Pareto curve (as reflected by its slope, that is, how much accuracy is gained per unit of strong verification cost) is fundamentally governed by the weak verifier signals. The algorithm's role is then to leverage this sharpness by dynamically identifying which reasoning steps require strong verification. In the Easy MATH subsets and the Sudoku task, the weak signals exhibit high sharpness, resulting in steeper curves. In these cases, the inherent quality of the signals enables near-Oracle accuracy with only a fraction of the strong verifier calls. For instance, in Sudoku, the Strong-Only Oracle reaches an accuracy of 44.2\% with 5.32 strong calls per puzzle. In contrast, our adaptive policy achieves a comparable 43.1\% accuracy (at $\alpha=\beta=0.01$) while requiring only 2.87 strong calls—a 46\% reduction in the load on the strong verifier.

In the hardest MATH setting (level 5), the relationship is more linear: as the weak signals become less decisive, higher accuracy requires an approximately proportional increase in strong verification. Yet SSV still concentrates resources where they are most needed, reaching 60\% accuracy with 2 calls per problem, compared to the Oracle's 2.8 calls for 63.5\% accuracy. For a detailed analysis of the weak verification scores, refer to Appendix~\ref{app:analysis-weak-verifier-scores}.

Finally, we observe a compounded verification efficiency in the sequential reasoning setting. As shown in the last column of Table~\ref{tab:sudoku-results}, SSV is more query-efficient with the weak verifier than the Weak-Only baseline. While the Weak-Only baseline requires 6.00 weak calls per puzzle to reach its lower success rate, our policy averages between 4.8--5.2 weak calls across all operating points. This suggests that by explicitly modeling uncertainty through adaptive thresholds, SSV avoids redundant reasoning steps. It achieves this in two ways: by accepting a confident result early or by escalating immediately to the strong verifier when the weak signal is deemed insufficient to support the target reliability. This immediate escalation bypasses further weak verification cycles that are statistically likely to be uninformative.


\begin{wraptable}{r}{0.5\columnwidth}
\vspace{-0.5\baselineskip} 
\centering

\caption{\textbf{Performance Tradeoffs on Sudoku.} Detailed comparison across different error regimes.}
\label{tab:sudoku-results}
\resizebox{0.5\columnwidth}{!}{%
\begin{tabular}{@{}lccc@{}}
\toprule
\textbf{Method} & \textbf{Accuracy} & \textbf{Strong/Puzzle} & \textbf{Weak/Puzzle} \\ \midrule
Strong Baseline (Oracle) & 44.2\% & 5.32 & -- \\ \midrule
SSV ($\alpha=\beta=0.001$) & 43.1\% & 3.31 & 5.22 \\
SSV ($\alpha=\beta=0.01$)  & 43.1\% & 2.87 & 5.19 \\
SSV ($\alpha=\beta=0.03$)  & 42.7\% & 2.14 & 5.20 \\
SSV ($\alpha=\beta=0.05$)  & 42.1\% & 2.06 & 5.22 \\
SSV ($\alpha=\beta=0.10$)  & 39.0\% & 1.24 & 5.18 \\
SSV ($\alpha=\beta=0.20$)  & 36.2\% & 0.76 & 5.04 \\
SSV ($\alpha=\beta=0.30$)  & 34.5\% & 0.68 & 4.84 \\ \midrule
Weak Baseline (Greedy)   & 33.6\% & 0.00 & 6.00 \\ \bottomrule
\end{tabular}%
}
\vspace{-0.75\baselineskip} 
\end{wraptable}
\textbf{Summary of Results.} Across both mathematical reasoning and sequential puzzle-solving, our experiments yield three key findings. First, SSV consistently maintains target error rates without prior knowledge of the query distribution or verifier quality. Second, our framework provides a smooth Pareto frontier that allows users to prioritize either reasoning quality or operational cost. Finally, by identifying and acting upon the "sharpness" of the weak signal, SSV achieves near-oracle performance with a fraction of the expensive verification load when possible.

\section{Discussion and Future Work}
We introduced a principled algorithmic framework for orchestrating weak and strong verification in LLM reasoning, showing--both theoretically and empirically--that it is possible to achieve reasoning performance comparable to always applying strong verification while querying the strong verifier only a small fraction of the time. A key limitation of the current framework is that the decision to use strong verification depends only on the weak score, and not on the broader prompt--response context $(P_t, R_t)$; consequently, our guarantees control type-I and type-II errors only in a marginal sense, averaged over all rounds. We view this as a modeling choice rather than a fundamental limitation. Incorporating contextual dependence into weak--strong verification policies could enable finer-grained and more efficient allocation of strong verification, but requires more complex online calibration procedures, including context-dependent thresholds and conditional error control under partial feedback. We leave this as an important direction for future work.

\section*{Impact Statement}
This work provides a principled framework for allocating costly verification resources in reasoning systems, enabling reliable use of large language models at scale. By reducing unnecessary human and computational verification while preserving correctness guarantees, our approach can lower deployment costs and improve the safety and trustworthiness of AI-assisted decision making across high-stakes domains.

\bibliography{example_paper}
\bibliographystyle{icml2026}

\newpage
\appendix

\clearpage
\appendix
\part*{Appendix}
\addcontentsline{toc}{part}{Appendix} 
\localtableofcontents                  
\newpage
\section{Proofs}

\textbf{Proof of Theorem \ref{thm:threshold_main} and Proposition \ref{prop:value_main}:}


\begin{proof}
Let $(P,R)$ be distributed as $P\sim \mathcal{D}$ and $R=f(P)$. Define the random variables
\[
W := w(P,R)\in[0,1],\qquad g := g(P,R)\in\{0,1\},
\]
and let $\alpha_0 := \Pr[g=0]$ and $\alpha_1 := \Pr[g=1]$. Assume $\alpha_0,\alpha_1>0$ (otherwise the conditional error terms in~(1) are vacuous and the problem is trivial).
For $\lambda_1,\lambda_2\ge 0$, consider the population objective
\[
V(\lambda_1,\lambda_2)
:=\min_{\pi:[0,1]\to\{A,R,SV\}}
\Pr[\pi(W)=SV]
+\lambda_1\Pr[\pi(W)=A\mid g=0]
+\lambda_2\Pr[\pi(W)=R\mid g=1].
\]
Define the effective weights
\[
a:=\frac{\lambda_1}{\alpha_0},\qquad b:=\frac{\lambda_2}{\alpha_1}.
\]

\begin{claim}[Unconditional and pointwise form]
Under Assumption~3.1 (calibration), the objective can be written as
\[
V(\lambda_1,\lambda_2)
=\min_{\pi:[0,1]\to\{A,R,SV\}}
\mathbb{E}\!\left[\ell\bigl(W,\pi(W)\bigr)\right],
\]
where the pointwise action costs are
\[
\ell(w,SV)=1,\qquad \ell(w,A)=a(1-w),\qquad \ell(w,R)=bw.
\]
Moreover, an optimal policy can be chosen deterministically and pointwise:
\[
\pi^\star(w)\in \arg\min_{u\in\{A,R,SV\}}\ell(w,u)\quad\text{for all }w\in[0,1],
\]
and hence
\[
V(\lambda_1,\lambda_2)=\mathbb{E}\!\left[\min\{1,\ a(1-W),\ bW\}\right].
\]
\end{claim}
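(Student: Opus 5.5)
The plan is to convert each conditional probability in the objective into an unconditional expectation over $W$, using Bayes' rule and then the tower property with calibration. For a (possibly randomized) policy, write $p_u(w):=\Pr[\pi(W)=u\mid W=w]$ for $u\in\{A,R,SV\}$; the policy's internal randomness is independent of $(P,R)$ given $W$. Then
\[
\Pr[\pi(W)=A\mid g=0]=\frac{1}{\alpha_0}\,\mathbb{E}\!\left[p_A(W)\,\mathbf{1}\{g=0\}\right]=\frac{1}{\alpha_0}\,\mathbb{E}\!\left[p_A(W)\,\Pr[g=0\mid W]\right]=\frac{1}{\alpha_0}\,\mathbb{E}\!\left[p_A(W)(1-W)\right].
\]
The second equality conditions on $W$, and the third uses Assumption~\ref{ass:calibration}, i.e.\ $\Pr[g=1\mid W]=W$ almost surely. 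In the same way, $\Pr[\pi(W)=R\mid g=1]=\frac{1}{\alpha_1}\mathbb{E}[p_R(W)W]$ and $\Pr[\pi(W)=SV]=\mathbb{E}[p_{SV}(W)]$.

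Summing these with the weights $\lambda_1,\lambda_2$ and using $a=\lambda_1/\alpha_0$, $b=\lambda_2/\alpha_1$, the objective becomes
\[
\mathbb{E}\Big[\sum_u p_u(W)\,\ell(W,u)\Big].
\]
For a deterministic $\pi$ this is exactly $\mathbb{E}[\ell(W,\pi(W))]$, which proves the first display of the claim.

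For the pointwise part, I use that $\sum_u p_u(w)\ell(w,u)\ge \min_u \ell(w,u)$ for every $w$, since the $p_u(w)$ form a probability vector. Hence every policy, randomized or not, has objective at least $\mathbb{E}[\min\{1,a(1-W),bW\}]$. Conversely, the deterministic selector $\pi^\star(w)\in\arg\min_u\ell(w,u)$ attains this bound.

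The one genuine technical point, and the step I expect to be the main obstacle, is measurability. We need $\pi^\star$ measurable so that $\pi^\star(W)$ is a random variable. I would fix a tie-breaking order, say SV before A before R. Then each set $\{w:\pi^\star(w)=u\}$ is cut out by finitely many affine inequalities in $w$, so it is a finite union of intervals and hence Borel. This yields $V(\lambda_1,\lambda_2)=\mathbb{E}[\min\{1,a(1-W),bW\}]$, which is Proposition~\ref{prop:value_main}.

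The threshold structure of Theorem~\ref{thm:threshold_main} then follows as a consequence. The function $bw-a(1-w)$ is increasing in $w$, and the three costs are affine. So with the tie-breaking above, R is chosen on a set of the form $\{w<t_{\mathrm{low}}\}$ and A on a set of the form $\{w>t_{\mathrm{high}}\}$, with SV in between. Edge cases where a region is empty are handled by setting the corresponding threshold to $0$ or $1$, or by taking $t_{\mathrm{low}}=t_{\mathrm{high}}$ at the crossing point $a/(a+b)$ when SV is never strictly optimal.
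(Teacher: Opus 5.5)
Your proof is correct and follows essentially the same route as the paper: write the conditional probabilities as joint probabilities over $\alpha_0,\alpha_1$, use the tower property with calibration to replace $\Pr[g=0\mid W]$ and $\Pr[g=1\mid W]$ by $1-W$ and $W$, and then minimize pointwise, since a randomized action's conditional cost is a convex combination of the $\ell(w,u)$; your explicit $p_u(w)$ bookkeeping and the measurability check for the tie-broken selector are small rigor additions that the paper leaves implicit. One caution about your closing aside on Theorem~\ref{thm:threshold_main}, which goes beyond this claim: when $ab/(a+b)<1$ the SV region is empty, and setting $t_{\mathrm{low}}=t_{\mathrm{high}}=a/(a+b)$ forces SV at a point where it is strictly worse than A or R, which is harmless only if $\Pr[W=a/(a+b)]=0$ (the paper's proof of that theorem likewise treats only $a=0$, $b=0$, or $1/b\le 1-1/a$, so it leaves this edge case open too).
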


\begin{proof}
First rewrite the conditional terms:
\[
\Pr[\pi(W)=A\mid g=0]=\frac{\Pr[\pi(W)=A,\ g=0]}{\alpha_0},\qquad
\Pr[\pi(W)=R\mid g=1]=\frac{\Pr[\pi(W)=R,\ g=1]}{\alpha_1}.
\]
Thus,
\[
V(\lambda_1,\lambda_2)
=\min_{\pi}\Big(\Pr[\pi(W)=SV]+a\,\Pr[\pi(W)=A,\ g=0]+b\,\Pr[\pi(W)=R,\ g=1]\Big).
\]
Conditioning on $W$ and using the law of total expectation,
\[
\Pr[\pi(W)=A,\ g=0]
=\mathbb{E}\!\left[\mathbf{1}\{\pi(W)=A\}\Pr(g=0\mid W)\right],\]
\[
\Pr[\pi(W)=R,\ g=1]
=\mathbb{E}\!\left[\mathbf{1}\{\pi(W)=R\}\Pr(g=1\mid W)\right].
\]
By Assumption~3.1, $\Pr(g=1\mid W)=W$ and $\Pr(g=0\mid W)=1-W$, so
\[
V(\lambda_1,\lambda_2)
=\min_{\pi}\mathbb{E}\!\left[
\mathbf{1}\{\pi(W)=SV\}
+a\,\mathbf{1}\{\pi(W)=A\}(1-W)
+b\,\mathbf{1}\{\pi(W)=R\}W
\right]
=\min_{\pi}\mathbb{E}\!\left[\ell\bigl(W,\pi(W)\bigr)\right],
\]
with $\ell$ as stated.

For the pointwise optimality: for each $w$, the quantity inside the expectation depends on $\pi$ only through the chosen action at $w$.
If $\pi$ is randomized, then conditional on $W=w$ its expected cost is a convex combination of $\{\ell(w,A),\ell(w,R),\ell(w,SV)\}$, which is minimized by placing all mass on an action that attains the minimum.
Thus an optimal policy exists that is deterministic and satisfies
$\pi^\star(w)\in\arg\min_{u}\ell(w,u)$ for all $w$,
and the optimal value is
\[
V(\lambda_1,\lambda_2)
=\mathbb{E}\!\left[\min\{1,\ a(1-W),\ bW\}\right].
\]
\end{proof}

The value identity in the claim is exactly Proposition~3.3.

It remains to show the threshold structure of an optimal policy (Theorem~3.2) by comparing the three affine costs.
If $a=0$, then $\ell(w,A)\equiv 0$ is pointwise minimal and $\pi^\star(w)\equiv A$.
If $b=0$, then $\ell(w,R)\equiv 0$ is pointwise minimal and $\pi^\star(w)\equiv R$.
Assume henceforth $a>0$ and $b>0$, and define
\[
t_{\mathrm{low}}:=\frac{1}{b},\qquad t_{\mathrm{high}}:=1-\frac{1}{a}.
\]
Observe that $\ell(w,SV)\le \ell(w,R)$ iff $1\le bw$ iff $w\ge t_{\mathrm{low}}$, and
$\ell(w,SV)\le \ell(w,A)$ iff $1\le a(1-w)$ iff $w\le t_{\mathrm{high}}$.
Therefore, whenever $t_{\mathrm{low}}\le t_{\mathrm{high}}$, we have for every $w\in[t_{\mathrm{low}},t_{\mathrm{high}}]$ that
\[
\ell(w,SV)=1\le bw=\ell(w,R)\quad\text{and}\quad \ell(w,SV)=1\le a(1-w)=\ell(w,A),
\]
so $SV$ is pointwise optimal on $[t_{\mathrm{low}},t_{\mathrm{high}}]$.

For $w<t_{\mathrm{low}}$, we have $bw<1$, and also $w<t_{\mathrm{low}}\le t_{\mathrm{high}}$ implies $w<1-\frac1a$, i.e. $a(1-w)>1$.
Hence $\ell(w,R)=bw<1=\ell(w,SV)<a(1-w)=\ell(w,A)$, so $R$ is pointwise optimal.
Similarly, for $w>t_{\mathrm{high}}$, we have $a(1-w)<1$, and $w>t_{\mathrm{high}}\ge t_{\mathrm{low}}$ implies $bw>1$,
so $\ell(w,A)<\ell(w,SV)\le \ell(w,R)$ and $A$ is pointwise optimal.
Thus, when $t_{\mathrm{low}}\le t_{\mathrm{high}}$, one optimal policy is
\[
\pi^\star(w)=
\begin{cases}
R, & w<t_{\mathrm{low}},\\
SV,& t_{\mathrm{low}}\le w\le t_{\mathrm{high}},\\
A, & w>t_{\mathrm{high}},
\end{cases}
\]
with arbitrary tie-breaking at the thresholds, establishing the two-threshold structure in Theorem~3.2.

Finally, in the three-region regime $t_{\mathrm{low}}\le t_{\mathrm{high}}$, the value formula can be expanded by splitting the event
$\{W<t_{\mathrm{low}}\}$, $\{t_{\mathrm{low}}\le W\le t_{\mathrm{high}}\}$, and $\{W>t_{\mathrm{high}}\}$:
\[
V(\lambda_1,\lambda_2)
= b\,\mathbb{E}\!\left[W\,\mathbf{1}\{W<t_{\mathrm{low}}\}\right]
+\Pr\!\left(t_{\mathrm{low}}\le W\le t_{\mathrm{high}}\right)
+a\,\mathbb{E}\!\left[(1-W)\,\mathbf{1}\{W>t_{\mathrm{high}}\}\right].
\]
This completes the proof of both results.
\end{proof}

\textbf{Proof of Theorem \ref{thm:finite_main}:}

\begin{proof}
We prove the type-I and type-II bounds simultaneously via a common template.
The key idea is that the threshold updates track (via importance weighting) the deviation between
a \emph{threshold-induced} error rate and its target level, and the \emph{policy} errors are pointwise
dominated by these threshold-induced rates. We control the tracking error with a martingale
concentration bound (Freedman), and control the drift of the thresholds by a deterministic bound.

\paragraph{Notation and setup.}
Let $\mathcal{F}_{t}$ denote the sigma-field generated by the full interaction history up to time $t$
(including $w_{1:t}$, the algorithm's past actions, and all queried strong labels).
Let
\[
O_t := \mathbf{1}\{a_t=\mathrm{SV}\}\in\{0,1\}
\quad\text{and}\quad
q_t := \mathbb{P}(O_t=1 \mid \mathcal{F}_{t-1},P_t,R_t),
\]
so that $q_t$ is predictable and $q_{\min}:=\min_{t\le T}q_t>0$ by assumption.

Define the latent counts
\[
N_0(T):=\sum_{t=1}^T \mathbf{1}\{g_t=0\},\qquad
N_1(T):=\sum_{t=1}^T \mathbf{1}\{g_t=1\}.
\]

It will be convenient to define the following \emph{threshold-induced} error rates:
\begin{align*}
\overline{\mathrm{Err}}_{\mathrm{I}}(T)
&:=
\frac{1}{N_0(T)}\sum_{t=1}^T \mathbf{1}\{g_t=0\}\,\mathbf{1}\{w_t>\tau_A^t\},\\
\overline{\mathrm{Err}}_{\mathrm{II}}(T)
&:=
\frac{1}{N_1(T)}\sum_{t=1}^T \mathbf{1}\{g_t=1\}\,\mathbf{1}\{w_t<\tau_R^t\},
\end{align*}
with the convention that each ratio is $0$ when its denominator is $0$.
These differ from the \emph{policy} errors in the theorem,
\[
\mathrm{Err}_{\mathrm{I}}(T)
=\frac{1}{N_0(T)}\sum_{t=1}^T \mathbf{1}\{g_t=0,\ \pi_t(w_t)=A\},
\qquad
\mathrm{Err}_{\mathrm{II}}(T)
=\frac{1}{N_1(T)}\sum_{t=1}^T \mathbf{1}\{g_t=1,\ \pi_t(w_t)=R\},
\]
because the policy may choose $\mathrm{SV}$ (exploration) even when $w_t$ lies in the A/R regions.

For the accept side, define
\[
x_t^A := \mathbf{1}\{w_t>\tau_A^t\}-\alpha\in[-1,1],
\qquad
e_t^A := \mathbf{1}\{g_t=0\}\,x_t^A,
\qquad
\widehat e_t^A := \mathbf{1}\{g_t=0\}\,\frac{O_t}{q_t}\,x_t^A.
\]
For the reject side, define
\[
x_t^R := \mathbf{1}\{w_t<\tau_R^t\}-\beta\in[-1,1],
\qquad
e_t^R := \mathbf{1}\{g_t=1\}\,x_t^R,
\qquad
\widehat e_t^R := \mathbf{1}\{g_t=1\}\,\frac{O_t}{q_t}\,x_t^R.
\]

Finally, define
\[
\varepsilon(N,\delta)
:=
\sqrt{\frac{2\log(4/\delta)}{N q_{\min}}}
+
\frac{\log(4/\delta)}{3N q_{\min}},
\qquad
\varepsilon(0,\delta):=0,
\qquad
B:=1+\frac{2\eta}{q_{\min}}.
\]
Note that $\Delta(N,\delta)=\frac{B}{\eta N}+\varepsilon(N,\delta)$ (with $\Delta(0,\delta)=0$).

\begin{claim}[Unbiased importance weighting]\label{cl:unbiased}
For each $t$,
\[
\mathbb{E}\!\left[\widehat e_t^A \mid \mathcal{F}_{t-1},P_t,R_t,g_t\right]=e_t^A,
\qquad
\mathbb{E}\!\left[\widehat e_t^R \mid \mathcal{F}_{t-1},P_t,R_t,g_t\right]=e_t^R.
\]
Consequently, $Z_t^A:=\widehat e_t^A-e_t^A$ and $Z_t^R:=\widehat e_t^R-e_t^R$ are martingale differences.
\end{claim}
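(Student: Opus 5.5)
The claim follows by conditioning on everything except the algorithm's fresh coin at round $t$. Fix the conditioning $\sigma$-field $\mathcal{G}_t:=\sigma(\mathcal{F}_{t-1},P_t,R_t,g_t)$. The plan is to show that every factor of $\widehat e_t^A$ other than $O_t$ is $\mathcal{G}_t$-measurable, so that the expectation only acts on $O_t$.

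\textbf{Step 1: measurability of the other factors.}
\begin{itemize}
\item The thresholds $\tau_A^t,\tau_R^t$ are built from the history up to round $t-1$, so they are $\mathcal{F}_{t-1}$-measurable.
\item The score $w_t=w(P_t,R_t)$ is a deterministic function of $(P_t,R_t)$.
\item Hence $x_t^A=\mathbf{1}\{w_t>\tau_A^t\}-\alpha$ is $\mathcal{G}_t$-measurable, and so is $\mathbf{1}\{g_t=0\}$.
\item The probability $q_t$ is fixed by the algorithm before its coin is tossed. It equals $q_A^t$, $q_R^t$ or $1$ according to the region of $w_t$, and is therefore $\mathcal{G}_t$-measurable. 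It is also positive, since $q_A^t,q_R^t\ge q_{\min}>0$.
\end{itemize}

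\textbf{Step 2: the conditional law of $O_t$.} I will argue that
\[
\mathbb{E}[O_t\mid\mathcal{G}_t]=\mathbb{E}[O_t\mid\mathcal{F}_{t-1},P_t,R_t]=q_t .
\]
The point is that the latent label $g_t$ is conditionally independent of the algorithm's round-$t$ randomization, given $(\mathcal{F}_{t-1},P_t,R_t)$. This holds because $g_t=g(P_t,R_t)$ is a deterministic function of $(P_t,R_t)$. Adding $g_t$ to the conditioning therefore changes nothing, and the second equality is the definition of $q_t$.

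If one allows the strong verifier to be random, this step needs a modelling convention instead: the exploration coin must be drawn independently of everything else at round $t$. This is the only place an assumption enters, and it is the step I would state carefully. It is the main (albeit mild) obstacle, because conditioning on $g_t$ could in principle break the identity $\mathbb{E}[O_t\mid\cdot]=q_t$ if the coin were correlated with the label.

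\textbf{Step 3: pull out the measurable factors.} Combining the two steps,
\[
\mathbb{E}[\widehat e_t^A\mid\mathcal{G}_t]=\mathbf{1}\{g_t=0\}\,\frac{x_t^A}{q_t}\,\mathbb{E}[O_t\mid\mathcal{G}_t]=\mathbf{1}\{g_t=0\}\,x_t^A=e_t^A .
\]
The reject side is identical, with $\mathbf{1}\{g_t=1\}$ and $x_t^R$ in place of $\mathbf{1}\{g_t=0\}$ and $x_t^A$.

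\textbf{Step 4: martingale differences.} Let $\mathcal{F}_t$ also contain the latent labels $g_{1:t}$; this only enlarges the filtration. By construction, $Z_t^A=\widehat e_t^A-e_t^A$ is $\mathcal{F}_t$-measurable. Since $\mathcal{F}_{t-1}\subseteq\mathcal{G}_t$, the tower property gives
\[
\mathbb{E}[Z_t^A\mid\mathcal{F}_{t-1}]=\mathbb{E}\big[\mathbb{E}[Z_t^A\mid\mathcal{G}_t]\mid\mathcal{F}_{t-1}\big]=0 .
\]
Integrability is immediate from the bound $|Z_t^A|\le 1/q_{\min}$. The same argument applies verbatim to $Z_t^R$.
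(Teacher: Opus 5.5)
Your proof is correct and is essentially the paper's argument: condition on $(\mathcal{F}_{t-1},P_t,R_t,g_t)$, note that $\mathbf{1}\{g_t=0\}$, $x_t^A$ and $q_t$ are then fixed, and use $\mathbb{E}[O_t\mid\cdot\,]=q_t$. Your Steps 2 and 4 spell out what the paper leaves implicit: conditioning on $g_t=g(P_t,R_t)$ does not change the coin's law, and the martingale-difference property follows by the tower property (you state it with respect to $\mathcal{F}_{t-1}$, whereas the paper reads it off the interleaved $\sigma$-fields $\sigma(\mathcal{F}_{t-1},P_t,R_t,g_t)$, which is the version its later Freedman step uses; both are valid for this claim).
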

\begin{proof}
We prove the accept statement; the reject statement is identical.
Condition on $(\mathcal{F}_{t-1},P_t,R_t,g_t)$. Then $\mathbf{1}\{g_t=0\}$ and $x_t^A$ are fixed, while
$O_t\sim \mathrm{Bernoulli}(q_t)$ with conditional mean $q_t$. Hence
\[
\mathbb{E}\!\left[\widehat e_t^A \mid \mathcal{F}_{t-1},P_t,R_t,g_t\right]
=
\mathbf{1}\{g_t=0\}\,x_t^A\,
\mathbb{E}\!\left[\frac{O_t}{q_t}\mid \mathcal{F}_{t-1},P_t,R_t,g_t\right]
=
\mathbf{1}\{g_t=0\}\,x_t^A
=
e_t^A.
\]
Therefore $\mathbb{E}[Z_t^A\mid \mathcal{F}_{t-1},P_t,R_t,g_t]=0$, i.e., $\{Z_t^A\}$ is a martingale-difference sequence.
\end{proof}

\begin{claim}[Telescoping inequalities from the threshold updates]\label{cl:telescoping}
For every $T\ge 1$,
\[
\sum_{t=1}^T \widehat e_t^A \le \frac{\tau_A^{T+1}-\tau_A^{1}}{\eta},
\qquad
\sum_{t=1}^T \widehat e_t^R \le \frac{\tau_R^{1}-\tau_R^{T+1}}{\eta}.
\]
\end{claim}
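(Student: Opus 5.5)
The plan is to derive a one-step inequality for each threshold and then sum it over $t=1,\dots,T$. The key observation is that the importance-weighted quantities $\widehat e_t^A$ and $\widehat e_t^R$ are exactly the increments fed into the unprojected updates of Algorithm~\ref{alg:adaptive}. The projections (the outer $\max$ for $\tau_A$ and the outer $\min$ for $\tau_R$) only push each threshold further in the favorable direction.

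\textbf{Accept side.} I will split into two cases according to $O_t$.
\begin{itemize}
\item If $O_t=0$, the action is A or R, so the algorithm sets $\tau_A^{t+1}=\tau_A^t$. Also $\widehat e_t^A=0$ because of the factor $O_t$. Hence $\tau_A^{t+1}= \tau_A^t+\eta\,\widehat e_t^A$ trivially.
\item If $O_t=1$, then $g_t$ is observed and the update applies. The $q_t$ used in the algorithm ($q_A^t$, $q_R^t$, or $1$ in the uncertain region) is precisely the conditional probability of querying. So the inner argument equals $\tau_A^t+\eta\,\mathbf{1}\{g_t=0\}x_t^A/q_t=\tau_A^t+\eta\,\widehat e_t^A$. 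Since $\max\{\tau_R^t,y\}\ge y$, we get $\tau_A^{t+1}\ge \tau_A^t+\eta\,\widehat e_t^A$.
\end{itemize}
In both cases $\eta\,\widehat e_t^A\le \tau_A^{t+1}-\tau_A^t$. Summing over $t\le T$ telescopes the right-hand side to $\tau_A^{T+1}-\tau_A^1$, and dividing by $\eta>0$ gives the first inequality.

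\textbf{Reject side.} The argument is symmetric. Note that $\mathbf{1}\{g_t=1\}(\beta-\mathbf{1}\{w_t<\tau_R^t\})/q_t=-\mathbf{1}\{g_t=1\}x_t^R/q_t$. So when $O_t=1$ the inner argument of the $\min$ equals $\tau_R^t-\eta\,\widehat e_t^R$, and $\min\{\tau_A^{t+1},y\}\le y$ yields $\tau_R^{t+1}\le \tau_R^t-\eta\,\widehat e_t^R$. When $O_t=0$ both sides agree with equality, since $\widehat e_t^R=0$ and $\tau_R^{t+1}=\tau_R^t$. Rearranging gives $\eta\,\widehat e_t^R\le \tau_R^t-\tau_R^{t+1}$, and summing gives the second inequality.

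\textbf{Main point of care.} There is no real obstacle; the only thing to check carefully is bookkeeping. The $q_t$ in the definition of $\widehat e_t$ must be the same quantity used in the algorithm's update, and this is true region by region. Also, $\widehat e_t$ is well defined even though $g_t$ is latent when $O_t=0$, because the factor $O_t$ kills the term in that case.
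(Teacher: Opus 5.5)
Your proposal is correct and follows the paper's proof: the paper also notes that the projected updates give $\tau_A^{t+1}\ge \tau_A^t+\eta\,\widehat e_t^A$ and $\tau_R^{t+1}\le \tau_R^t-\eta\,\widehat e_t^R$, then sums and telescopes. Your explicit case split on $O_t$, together with the check that the algorithm's $q_t$ equals the conditional query probability, spells out what the paper compresses into writing the update ``in terms of $O_t$''; for $O_t=0$ that formula reduces to $\max\{\tau_R^t,\tau_A^t\}=\tau_A^t$.
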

\begin{proof}
For the accept threshold, Algorithm~1 updates (in terms of $O_t$)
\[
\tau_A^{t+1}=\max\{\tau_R^t,\ \tau_A^t+\eta\,\widehat e_t^A\}\ \ge\ \tau_A^t+\eta\,\widehat e_t^A,
\]
so $\widehat e_t^A \le (\tau_A^{t+1}-\tau_A^t)/\eta$, and summing yields
$\sum_{t\le T}\widehat e_t^A \le (\tau_A^{T+1}-\tau_A^1)/\eta$.

For the reject threshold, Algorithm~1 updates
\[
\tau_R^{t+1}=\min\{\tau_A^{t+1},\ \tau_R^t-\eta\,\widehat e_t^R\}\ \le\ \tau_R^t-\eta\,\widehat e_t^R,
\]
so $\widehat e_t^R \le (\tau_R^t-\tau_R^{t+1})/\eta$, and summing yields
$\sum_{t\le T}\widehat e_t^R \le (\tau_R^1-\tau_R^{T+1})/\eta$.
\end{proof}

\begin{claim}[Uniform boundedness of thresholds]\label{cl:bounded}
For all $t\le T+1$,
\[
\tau_A^t,\tau_R^t \in \Big[-\frac{\eta}{q_{\min}},\ 1+\frac{\eta}{q_{\min}}\Big],
\]
and hence
\[
|\tau_A^{T+1}-\tau_A^1|\le B,
\qquad
|\tau_R^{T+1}-\tau_R^1|\le B.
\]
\end{claim}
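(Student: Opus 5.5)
We argue by induction on $t$ that both thresholds stay in the interval $I:=[-\eta/q_{\min},\,1+\eta/q_{\min}]$. The base case $t=1$ holds because $\tau_R^1,\tau_A^1\in[0,1]\subset I$. Along the way we keep the ordering $\tau_R^t\le\tau_A^t$, which the projections in Algorithm~\ref{alg:adaptive} preserve at every step. On rounds with $a_t\neq\mathrm{SV}$ the thresholds do not move, so the inductive step is only needed on rounds where $O_t=1$.

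For the upper bound on $\tau_A$, the increment is $\eta\,\widehat e_t^A$, and its magnitude is at most $\eta/q_t\le\eta/q_{\min}$ since $|x_t^A|\le 1$.

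\begin{itemize}
\item \emph{Case $\tau_A^t\le 1$.} Then $\tau_A^t+\eta\widehat e_t^A\le 1+\eta/q_{\min}$. The other argument of the max is $\tau_R^t\le\tau_A^t\le 1$, so $\tau_A^{t+1}\le 1+\eta/q_{\min}$.
\item \emph{Case $\tau_A^t>1$.} Then $\mathbf{1}\{w_t>\tau_A^t\}=0$ because $w_t\in[0,1]$. So $x_t^A=-\alpha<0$ and the gradient term is nonpositive. The max with $\tau_R^t$ cannot push above $\tau_A^t$, since $\tau_R^t\le\tau_A^t$. Hence $\tau_A^{t+1}\le\tau_A^t\le 1+\eta/q_{\min}$.
\end{itemize}

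The lower bound on $\tau_R$ is symmetric.

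\begin{itemize}
\item \emph{Case $\tau_R^t\ge 0$.} Then $\tau_R^t-\eta\widehat e_t^R\ge -\eta/q_{\min}$. The projection $\min\{\tau_A^{t+1},\cdot\}$ needs $\tau_A^{t+1}\ge-\eta/q_{\min}$, which holds because $\tau_A^{t+1}\ge\tau_R^t\ge 0$.
\item \emph{Case $\tau_R^t<0$.} Then $\mathbf{1}\{w_t<\tau_R^t\}=0$, so $x_t^R=-\beta$ and the update $\tau_R^t+\eta\mathbf{1}\{g_t=1\}\beta/q_t$ is nondecreasing. We also have $\tau_A^{t+1}\ge\tau_R^t$ by the max. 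Hence $\tau_R^{t+1}\ge\tau_R^t\ge-\eta/q_{\min}$.
\end{itemize}

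The remaining bounds follow from ordering. We get $\tau_R^{t+1}\le\tau_A^{t+1}\le 1+\eta/q_{\min}$, and $\tau_A^{t+1}\ge\tau_R^{t}\ge-\eta/q_{\min}$. Although $\tau_A^{t+1}$ is not directly compared to $\tau_R^{t+1}$ in the max, we have $\tau_R^{t+1}\le\tau_A^{t+1}$ by the min, so ordering is maintained. This completes the induction.

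For the difference bounds, both $\tau^{T+1}$ and $\tau^1$ lie in $I$, with $\tau^1\in[0,1]$. Consider $\tau_A$; the same argument applies to $\tau_R$.

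\begin{itemize}
\item If $\tau_A^{T+1}\ge\tau_A^1$, then $\tau_A^{T+1}-\tau_A^1\le 1+\eta/q_{\min}-0\le B$.
\item Otherwise, $\tau_A^1-\tau_A^{T+1}\le 1+\eta/q_{\min}\le B$.
\end{itemize}

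A cruder bound of $1+2\eta/q_{\min}=B$, equal to the length of $I$, also suffices.

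The main obstacle is the self-correcting case analysis: threshold values outside $[0,1]$ must produce a gradient pointing back toward $[0,1]$. This needs care with the interaction between the projection and the ordering, in particular checking that the max in the $\tau_A$ update and the min in the $\tau_R$ update never push a threshold further out than one step of size $\eta/q_{\min}$.
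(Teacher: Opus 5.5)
Your proof is correct. It rests on the same two facts as the paper's proof. First, raw steps have magnitude at most $\eta/q_{\min}$. Second, since $w_t\in[0,1]$, a threshold outside $[0,1]$ receives a raw step pointing back toward it. You organize these facts differently, though.

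The paper invokes self-correction on both sides of each threshold and concludes that each threshold overshoots $[0,1]$ by at most one step. It disposes of $\tau_R$ with ``the same reasoning applies'' and never addresses the $\max/\min$ projections. Yet its upper bound on $\tau_A$ tacitly needs $\tau_R^t\le\tau_A^t$, or an inductive bound on $\tau_R^t$, to control $\max\{\tau_R^t,\cdot\}$. The lower bound on $\tau_R$ needs a lower bound on $\tau_A^{t+1}$ in the same way.

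You make this coupling explicit with a joint induction that carries $\tau_R^t\le\tau_A^t$. You use self-correction only on the outer side of each threshold (top for $\tau_A$, bottom for $\tau_R$). The inner bounds then come for free from $\tau_A^{t+1}\ge\tau_R^t$ and $\tau_R^{t+1}\le\tau_A^{t+1}$.

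This gives you two side benefits.
\begin{itemize}
\item You never need the inner-side indicator evaluation. That is where the paper's assertion that $\tau_A^t\le 0$ forces $\mathbf{1}\{w_t>\tau_A^t\}=1$ fails, at $\tau_A^t=w_t=0$. The failure is harmless, since one step from $0$ stays in the interval.
\item You obtain the sharper $|\tau^{T+1}-\tau^1|\le 1+\eta/q_{\min}$, whereas the paper's $B$ is the full length of the interval.
\end{itemize}

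The paper's route is shorter, and it shows that the raw, unprojected update of each threshold is self-confining on both sides. Yours is the more rigorous account of the algorithm as actually written.
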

\begin{proof}
We prove the bound for $\tau_A^t$; the argument for $\tau_R^t$ is analogous.

First, note that $|x_t^A|\le 1$ and $O_t/q_t\le 1/q_{\min}$ imply
\[
|\widehat e_t^A|
=
\mathbf{1}\{g_t=0\}\frac{O_t}{q_t}|x_t^A|
\le \frac{1}{q_{\min}}.
\]
Thus whenever $\tau_A$ changes at all, its \emph{raw} increment is bounded:
$|\eta\,\widehat e_t^A|\le \eta/q_{\min}$.

Now use the fact that $w_t\in[0,1]$.
If $\tau_A^t\ge 1$, then $\mathbf{1}\{w_t>\tau_A^t\}=0$, so $x_t^A=-\alpha\le 0$ and any raw update can only decrease $\tau_A$.
If $\tau_A^t\le 0$, then $\mathbf{1}\{w_t>\tau_A^t\}=1$, so $x_t^A=1-\alpha\ge 0$ and any raw update can only increase $\tau_A$.
Therefore, starting from $\tau_A^1\in[0,1]$, the process can overshoot the interval $[0,1]$ by at most one raw step,
i.e., by at most $\eta/q_{\min}$, which yields
\[
\tau_A^t \in \Big[-\frac{\eta}{q_{\min}},\ 1+\frac{\eta}{q_{\min}}\Big]\quad\text{for all }t\le T+1.
\]
The bound $|\tau_A^{T+1}-\tau_A^1|\le 1+2\eta/q_{\min}=B$ follows since $\tau_A^1\in[0,1]$.
The same reasoning applies to $\tau_R^t$.
\end{proof}

\begin{claim}[Freedman bound; variance computed explicitly]\label{cl:freedman}
Let
\[
M_T^A := \sum_{t=1}^T (\widehat e_t^A-e_t^A),
\qquad
M_T^R := \sum_{t=1}^T (\widehat e_t^R-e_t^R).
\]
Then, with probability at least $1-\delta/2$,
\[
|M_T^A|\le N_0(T)\,\varepsilon\!\big(N_0(T),\delta\big),
\qquad
|M_T^R|\le N_1(T)\,\varepsilon\!\big(N_1(T),\delta\big),
\]
with the convention that the right-hand side is $0$ when the corresponding $N_i(T)=0$.
\end{claim}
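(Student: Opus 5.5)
The plan is to apply Freedman's inequality (the Bernstein-type martingale bound) separately to $M_T^A$ and $M_T^R$. I will give the accept side; the reject side is identical with $g_t=0$ replaced by $g_t=1$ and $N_0$ by $N_1$. If $N_0(T)=0$ then $\mathbf{1}\{g_t=0\}=0$ for every $t$, so $\widehat e_t^A=e_t^A=0$ and $M_T^A=0$, and there is nothing to prove. Assume $N_0(T)\ge 1$. By Claim~\ref{cl:unbiased}, $Z_t^A=\widehat e_t^A-e_t^A$ is a martingale difference sequence with respect to the filtration $\mathcal{G}_t$ generated by $(\mathcal{F}_{t-1},P_t,R_t,g_t)$. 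So I need two inputs for Freedman: an almost-sure bound on the increments and a bound on the predictable quadratic variation.

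\textbf{Step 1 (range and variance).} Write $Z_t^A=\mathbf{1}\{g_t=0\}\,x_t^A\,(O_t/q_t-1)$ with $|x_t^A|\le 1$. Since $O_t/q_t-1\in[-1,\,1/q_t-1]$, we get $|Z_t^A|\le b:=1/q_{\min}$ (using $q_{\min}\le 1$). The conditional variance is
\[
\mathbb{E}\big[(Z_t^A)^2\mid\mathcal{G}_t\big]=\mathbf{1}\{g_t=0\}\,(x_t^A)^2\,\frac{1-q_t}{q_t}\le \frac{\mathbf{1}\{g_t=0\}}{q_{\min}}.
\]
Summing over $t\le T$, the total predictable variance satisfies $V_T\le v:=N_0(T)/q_{\min}$.

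\textbf{Step 2 (Freedman and inversion).} Let $L:=\log(4/\delta)$. Freedman's inequality gives, for each tail,
\[
\Pr\big(M_T^A\ge x,\ V_T\le v\big)\le \exp\!\Big(-\frac{x^2}{2v+2bx/3}\Big).
\]
I will take $x=\sqrt{2vL}+bL/3$. The standard check that $x^2\ge L(2v+2bx/3)$ shows this right-hand side is at most $\delta/4$. Applying the same bound to $-M_T^A$ and taking a union bound over the two tails gives $|M_T^A|\le x$ with probability at least $1-\delta/2$. Substituting $v=N_0(T)/q_{\min}$ and $b=1/q_{\min}$,
\[
x=\sqrt{\frac{2N_0(T)L}{q_{\min}}}+\frac{L}{3q_{\min}}=N_0(T)\,\varepsilon\big(N_0(T),\delta\big),
\]
which is the claimed bound. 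The reject side gives the same statement with $N_1(T)$. A union bound over the two sides then yields the overall $1-\delta$ probability used in Theorem~\ref{thm:finite_main}.

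\textbf{Main obstacle.} The delicate point is that Freedman requires a fixed variance level $v$, while $N_0(T)$ could in principle be random. The theorem's probability is taken over the algorithm's internal randomness only. I would therefore first condition on the realized stream of labels $(g_t)_{t\le T}$, treating the stream as oblivious, so that $N_0(T)$ is deterministic. In that case Step~2 applies directly. If the stream is allowed to adapt to past decisions, $N_0(T)$ becomes random. I would then take a union bound over its possible values $n\in\{1,\dots,T\}$, or over a dyadic peeling grid, which replaces $\log(4/\delta)$ by $\log(4T/\delta)$ or $\log(4\log_2(2T)/\delta)$. I would state this adjustment explicitly rather than hide it.
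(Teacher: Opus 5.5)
Your route is the paper's route: the same increment bound $|Z_t^A|\le 1/q_{\min}$, the same conditional variance $\mathbf{1}\{g_t=0\}(x_t^A)^2(1-q_t)/q_t\le \mathbf{1}\{g_t=0\}/q_{\min}$, and Freedman with $\delta/4$ per tail. However, one step in Step~2 fails as written: the ``standard check'' is false. With $x=\sqrt{2vL}+bL/3$,
\[
x^2-L\Big(2v+\frac{2bx}{3}\Big)=\frac{b^2L^2}{9}-\frac{2b^2L^2}{9}=-\frac{b^2L^2}{9}<0,
\]
so the Bernstein-form tail $\exp(-x^2/(2v+2bx/3))$ you wrote is strictly larger than $\delta/4$ at that $x$. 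Inverting that form only gives $\sqrt{2vL}+2bL/3$. To get the constant $bL/3$ that appears in $\varepsilon(N,\delta)$, you need Freedman in its sharper exponential-supermartingale form. For $\lambda\ge 0$ and increments bounded above by $b$, the process $\exp\bigl(\lambda M_t-\frac{e^{\lambda b}-1-\lambda b}{b^2}V_t\bigr)$ is a supermartingale. Combining this with $e^u-1-u\le \frac{u^2}{2(1-u/3)}$ for $0\le u<3$ yields the sub-gamma Chernoff bound $\Pr(M_T\ge x,\ V_T\le v)\le \inf_{0<\lambda<3/b}\exp\bigl(-\lambda x+\frac{\lambda^2 v}{2(1-\lambda b/3)}\bigr)$. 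This infimum equals $e^{-L}$ exactly at $x=\sqrt{2vL}+bL/3$. Otherwise you must enlarge the last term of $\varepsilon$ to $2\log(4/\delta)/(3Nq_{\min})$.

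Your ``main obstacle'' is a genuine issue, and the paper's proof does not address it. It plugs the random level $N_0(T)/q_{\min}$, and a threshold $x$ depending on it, directly into Freedman's inequality, which holds only for fixed $x,v$.

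Your treatment of it is correct. For an oblivious stream, conditioning makes $N_0(T)$ deterministic, and your argument (with the corrected inversion) gives the claim exactly. Adaptive streams are explicitly allowed by the setup, and the paper's own best-of-$n$ and Sudoku experiments are adaptive. For them, your union bound over $n\in\{1,\dots,T\}$ is valid, because $V_T\le n/q_{\min}$ on $\{N_0(T)=n\}$; the price is $\log(4T/\delta)$. The dyadic-peeling variant additionally costs up to a factor $2$ inside the square root, since on a bin $\{2^{k-1}<N_0\le 2^k\}$ you can only use $v=2^k/q_{\min}\le 2N_0/q_{\min}$.

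This loss is not an artifact of your argument. An adaptive stream that observes the algorithm's actions can issue $g_t=0$ rounds in a decisive region. The first time $|M^A|$ crosses $\sqrt{2nL/q_{\min}}+L/(3q_{\min})$, it can switch to $g_t=1$ rounds, which freeze both $M^A$ and $N_0$. So the bound effectively has to hold uniformly over $n\le T$. By the law of the iterated logarithm, a $T$-independent $\log(4/\delta)$ cannot guarantee that for all $T$. What you would prove is the honest version of the claim: state the modified logarithm explicitly and carry it into $\Delta$ in Theorem~\ref{thm:finite_main}.
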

\begin{proof}
We prove the accept statement; the reject statement is identical.

Define $Z_t^A:=\widehat e_t^A-e_t^A=\mathbf{1}\{g_t=0\}\,x_t^A\big(\frac{O_t}{q_t}-1\big)$.
Then $M_T^A=\sum_{t\le T} Z_t^A$, and by Claim~\ref{cl:unbiased} it is a martingale.

\emph{Bounded increments.}
Since $|x_t^A|\le 1$ and $q_t\ge q_{\min}$,
\[
|Z_t^A|
=
\mathbf{1}\{g_t=0\}\,|x_t^A|\cdot\Big|\frac{O_t}{q_t}-1\Big|
\le
\Big|\frac{O_t}{q_t}-1\Big|
\le
\max\Big\{1,\frac{1}{q_t}-1\Big\}
\le
\frac{1}{q_t}
\le
\frac{1}{q_{\min}}.
\]

\emph{Conditional variance (explicit computation).}
Because $\mathbb{E}[Z_t^A\mid \mathcal{F}_{t-1},P_t,R_t,g_t]=0$,
\[
\mathrm{Var}(Z_t^A\mid \mathcal{F}_{t-1},P_t,R_t,g_t)
=
\mathbb{E}\!\left[(Z_t^A)^2\mid \mathcal{F}_{t-1},P_t,R_t,g_t\right].
\]
Now
\[
(Z_t^A)^2
=
\mathbf{1}\{g_t=0\}\,(x_t^A)^2\Big(\frac{O_t}{q_t}-1\Big)^2.
\]
Conditioning on $(\mathcal{F}_{t-1},P_t,R_t,g_t)$ makes $\mathbf{1}\{g_t=0\}(x_t^A)^2$ fixed, so it remains to compute
\[
\mathbb{E}\!\left[\Big(\frac{O_t}{q_t}-1\Big)^2 \,\Bigm|\, \mathcal{F}_{t-1},P_t,R_t,g_t\right].
\]
Since $O_t\sim \mathrm{Bernoulli}(q_t)$ given $(\mathcal{F}_{t-1},P_t,R_t)$, we have
\begin{align*}
\mathbb{E}\!\left[\Big(\frac{O_t}{q_t}-1\Big)^2\right]
&=
q_t\Big(\frac{1}{q_t}-1\Big)^2 + (1-q_t)\Big(0-1\Big)^2\\
&=
q_t\cdot \frac{(1-q_t)^2}{q_t^2} + (1-q_t)\\
&=
\frac{(1-q_t)^2}{q_t} + (1-q_t)\\
&=
(1-q_t)\Big(\frac{1-q_t}{q_t}+1\Big)
=
\frac{1-q_t}{q_t}.
\end{align*}
Therefore,
\[
\mathrm{Var}(Z_t^A\mid \mathcal{F}_{t-1},P_t,R_t,g_t)
=
\mathbf{1}\{g_t=0\}\,(x_t^A)^2\frac{1-q_t}{q_t}
\le
\mathbf{1}\{g_t=0\}\frac{1}{q_t}
\le
\frac{\mathbf{1}\{g_t=0\}}{q_{\min}}.
\]
Summing over $t\le T$ gives the predictable variance proxy
\[
V_T^A:=\sum_{t=1}^T \mathrm{Var}(Z_t^A\mid \mathcal{F}_{t-1},P_t,R_t,g_t)
\le
\frac{N_0(T)}{q_{\min}}.
\]

\emph{Apply Freedman's inequality.}
Freedman's inequality for martingales with increment bound $b=1/q_{\min}$ and variance proxy $V_T^A\le N_0(T)/q_{\min}$
implies that, allocating tail probability $\delta/4$ to each side (two-sided bound),
\[
|M_T^A|
\le
\sqrt{2\frac{N_0(T)}{q_{\min}}\log\frac{4}{\delta}}
+\frac{1}{3q_{\min}}\log\frac{4}{\delta}
=
N_0(T)\,\varepsilon\!\big(N_0(T),\delta\big),
\]
with the convention $N_0(T)\varepsilon(N_0(T),\delta)=0$ if $N_0(T)=0$.
\end{proof}

\paragraph{Control the threshold-induced rates.}
Observe that
\[
N_0(T)\big(\overline{\mathrm{Err}}_{\mathrm{I}}(T)-\alpha\big)
=
\sum_{t=1}^T \mathbf{1}\{g_t=0\}\big(\mathbf{1}\{w_t>\tau_A^t\}-\alpha\big)
=
\sum_{t=1}^T e_t^A.
\]
Also $\sum_{t=1}^T e_t^A = \sum_{t=1}^T \widehat e_t^A - M_T^A$.
Using Claim~\ref{cl:telescoping} and Claim~\ref{cl:bounded},
\[
\sum_{t=1}^T \widehat e_t^A
\le
\frac{\tau_A^{T+1}-\tau_A^1}{\eta}
\le
\frac{|\tau_A^{T+1}-\tau_A^1|}{\eta}
\le
\frac{B}{\eta}.
\]
Combining with Claim~\ref{cl:freedman} gives, with probability at least $1-\delta/2$,
\[
\sum_{t=1}^T e_t^A
\le
\frac{B}{\eta} + |M_T^A|
\le
\frac{B}{\eta} + N_0(T)\,\varepsilon\!\big(N_0(T),\delta\big).
\]
Divide by $N_0(T)$ (or use the $N_0(T)=0$ convention) to obtain
\[
\overline{\mathrm{Err}}_{\mathrm{I}}(T)
\le
\alpha
+
\frac{B}{\eta N_0(T)}
+
\varepsilon\!\big(N_0(T),\delta\big)
=
\alpha+\Delta\!\big(N_0(T),\delta\big).
\]
The same argument on the reject side yields, with probability at least $1-\delta/2$,
\[
\overline{\mathrm{Err}}_{\mathrm{II}}(T)
\le
\beta+\Delta\!\big(N_1(T),\delta\big).
\]

\paragraph{From threshold-induced to policy errors (domination).}
If $\pi_t(w_t)=A$, then the algorithm must be in the accept region, hence $w_t>\tau_A^t$.
Therefore,
\[
\mathbf{1}\{g_t=0,\ \pi_t(w_t)=A\}\le \mathbf{1}\{g_t=0\}\mathbf{1}\{w_t>\tau_A^t\},
\]
and summing and dividing by $N_0(T)$ gives
$\mathrm{Err}_{\mathrm{I}}(T)\le \overline{\mathrm{Err}}_{\mathrm{I}}(T)$.
Similarly, if $\pi_t(w_t)=R$ then necessarily $w_t<\tau_R^t$, so
$\mathrm{Err}_{\mathrm{II}}(T)\le \overline{\mathrm{Err}}_{\mathrm{II}}(T)$.

\paragraph{Finish by a union bound.}
Each of the two events
$\{\overline{\mathrm{Err}}_{\mathrm{I}}(T)\le \alpha+\Delta(N_0(T),\delta)\}$
and
$\{\overline{\mathrm{Err}}_{\mathrm{II}}(T)\le \beta+\Delta(N_1(T),\delta)\}$
holds with probability at least $1-\delta/2$.
By a union bound, both hold with probability at least $1-\delta$.
Using domination then yields, on the same event,
\[
\mathrm{Err}_{\mathrm{I}}(T)\le \alpha+\Delta\!\big(N_0(T),\delta\big),
\qquad
\mathrm{Err}_{\mathrm{II}}(T)\le \beta+\Delta\!\big(N_1(T),\delta\big),
\]
which proves the theorem.
\end{proof}


\section{Additional Experiments}

\subsection{Additional Error Control Results}
\label{app:additional-error-control-results}
In this section, we provide additional evidence of the algorithm's convergence across varying target error configurations $(\alpha, \beta)$. The primary goal is to demonstrate that the online calibration mechanism is effective regardless of the specific reliability requirements set by the service provider.

\subsubsection{Outcome-Level Verification (MATH Dataset)}
Figures~\ref{fig:bestofn-conv-0p05}, \ref{fig:bestofn-conv-0p10}, and \ref{fig:bestofn-conv-0p20} show the running error convergence for the MATH dataset across three difficulty levels. These plots include a histogram on the left of each panel, which visualizes the final cumulative running errors at the last iteration. The specific hyperparameters used for these runs, including the learning rates ($\eta$) and initial thresholds ($\tau_0$), are detailed in the accompanying tables.

Tables~\ref{tab:bestofn-params-0p05}, \ref{tab:bestofn-params-0p10}, and \ref{tab:bestofn-params-0p20} detail the specific hyperparameter used—including the learning rates ($\eta$) and initial thresholds ($\tau_0$)—and provide the final empirical error rates (Err A/R) attained at the end of the interaction sequence for reproducibility.

\begin{figure}[htp]
    \centering
    \includegraphics[width=0.49\linewidth]{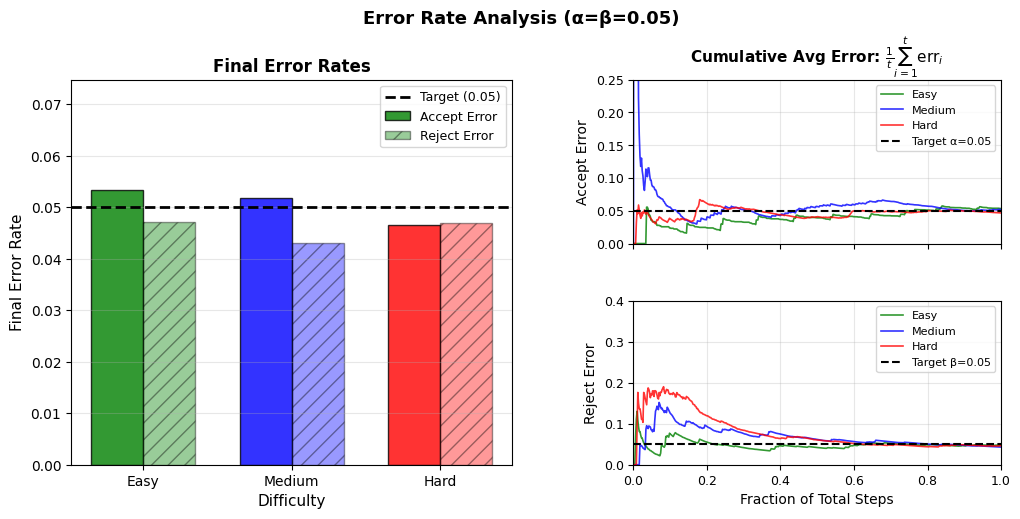}
    \caption{\textbf{Best-of-$n$ (MATH): error convergence for $\alpha=\beta=0.05$.}}
    \label{fig:bestofn-conv-0p05}

    \vspace{0.5em}
    \small
    \centering
    \begin{tabular}{lccc}
        \toprule
        \textbf{Diff.} & $\eta_A / \eta_R$ & $\tau_{A,0} / \tau_{R,0}$ & \textbf{Err (A/R)} \\
        \midrule
        Lvl 2 & 0.05 / 0.02 & 0.70 / 0.15 & .053 / .047 \\
        Lvl 3 & 0.05 / 0.01 & 0.90 / 0.15 & .052 / .043 \\
        Lvl 5 & 0.05 / 0.01 & 0.80 / 0.15 & .047 / .047 \\
        \bottomrule
    \end{tabular}
    \captionof{table}{Hyperparameters and final errors for Figure~\ref{fig:bestofn-conv-0p05} ($\alpha=\beta=0.05$).}
    \label{tab:bestofn-params-0p05}
\end{figure}
\begin{figure}[htp]
    \centering
    \includegraphics[width=0.49\linewidth]{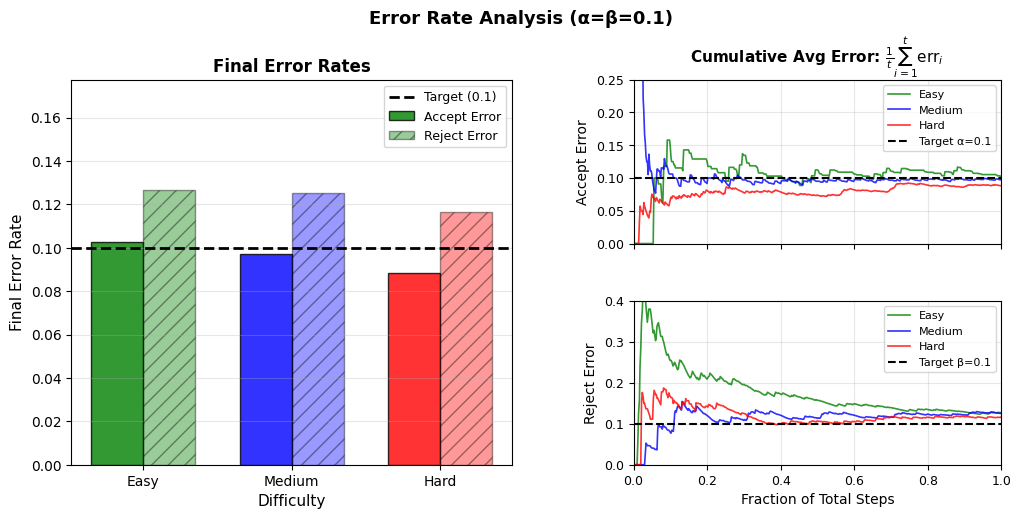}
    \caption{\textbf{Best-of-$n$ (MATH): error convergence for $\alpha=\beta=0.10$.}}
    \label{fig:bestofn-conv-0p10}

    \vspace{0.5em}
    \small
    \centering
    \begin{tabular}{lccc}
        \toprule
        \textbf{Diff.} & $\eta_A / \eta_R$ & $\tau_{A,0} / \tau_{R,0}$ & \textbf{Err (A/R)} \\
        \midrule
        Lvl 2 & 0.005 / 0.008 & 0.50 / 0.40 & .103 / .127 \\
        Lvl 3 & 0.010 / 0.001 & 0.90 / 0.15 & .097 / .125 \\
        Lvl 5 & 0.040 / 0.010 & 0.60 / 0.10 & .088 / .117 \\
        \bottomrule
    \end{tabular}
    \captionof{table}{Hyperparameters and final errors for Figure~\ref{fig:bestofn-conv-0p10} ($\alpha=\beta=0.10$).}
    \label{tab:bestofn-params-0p10}
\end{figure}
\begin{figure}[htp]
    \centering
    \includegraphics[width=0.49\linewidth]{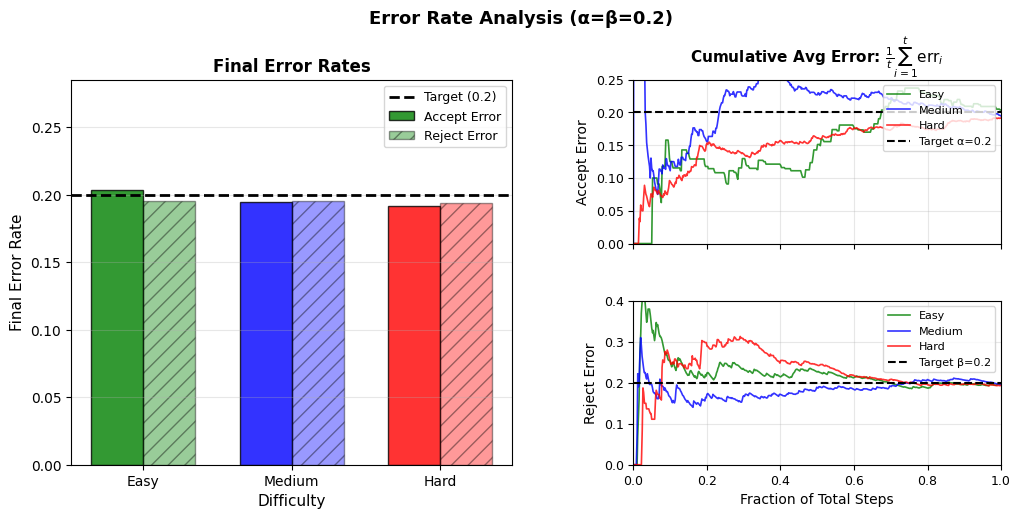}
    \caption{\textbf{Best-of-$n$ (MATH): error convergence for $\alpha=\beta=0.20$.}}
    \label{fig:bestofn-conv-0p20}

    \vspace{0.5em}
    \small
    \centering
    \begin{tabular}{lccc}
        \toprule
        \textbf{Diff.} & $\eta_A / \eta_R$ & $\tau_{A,0} / \tau_{R,0}$ & \textbf{Err (A/R)} \\
        \midrule
        Lvl 2 & 0.020 / 0.001 & 0.45 / 0.40 & .204 / .196 \\
        Lvl 3 & 0.050 / 0.020 & 0.60 / 0.30 & .195 / .196 \\
        Lvl 5 & 0.040 / 0.010 & 0.60 / 0.10 & .192 / .194 \\
        \bottomrule
    \end{tabular}
    \captionof{table}{Hyperparameters and final errors for Figure~\ref{fig:bestofn-conv-0p20} ($\alpha=\beta=0.20$).}
    \label{tab:bestofn-params-0p20}
\end{figure}

\subsubsection{Step-by-step Verification (Sudoku)}
In the sequential Sudoku task, the algorithm must adapt to a non-stationary environment where the distribution of weak scores changes as the board reaches completion. Below, we provide convergence plots and parameter tables for $\alpha = \beta = 0.10$ and $\alpha = \beta = 0.05$. These results confirm that the "two-threshold" policy effectively intercepts errors step-by-step, maintaining a stable error rate over the stream of sequential verification decisions.

\begin{figure*}[htp]
    \centering
    \begin{minipage}{0.48\textwidth}
        \centering
        \includegraphics[width=\linewidth]{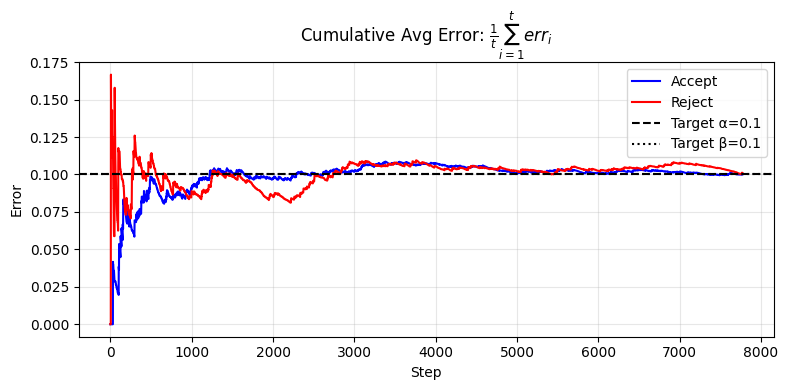}
        \captionof{figure}{Error convergence for Sudoku ($\alpha=\beta=0.10$).}
        \label{fig:sudoku_err_0_1}
        \vspace{0.5em}
        \small
        \begin{tabular*}{\linewidth}{@{\extracolsep{\fill}}lcccc}
            \toprule
            $\eta_A/\eta_R$ & $\tau_{A,0}/\tau_{R,0}$ & $\tau_{A,f}/\tau_{R,f}$ & \textbf{Err (A/R)} \\
            \midrule
            0.005 / 0.02 & 0.90 / 0.10 & 0.80 / 0.24 & .100 / .101 \\
            \bottomrule
        \end{tabular*}
        \captionof{table}{Parameters for Sudoku ($\alpha = \beta = 0.10$).}
        \label{tab:sudoku_params_0_1}
    \end{minipage}
    \hfill
    \begin{minipage}{0.48\textwidth}
        \centering
        \includegraphics[width=\linewidth]{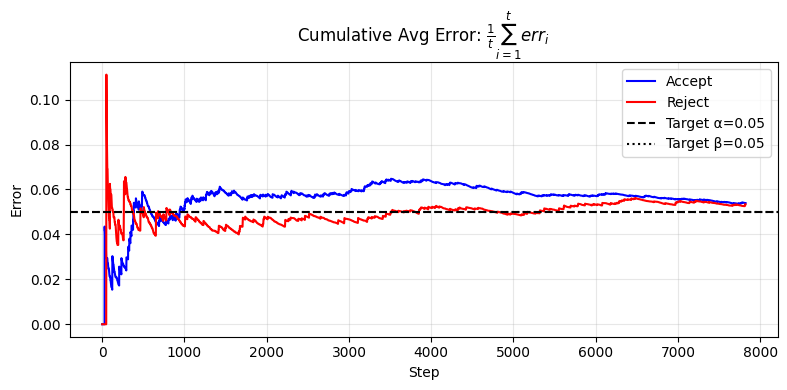}
        \captionof{figure}{Error convergence for Sudoku ($\alpha=\beta=0.05$).}
        \label{fig:sudoku_err_0_05}
        \vspace{0.5em}
        \small
        \begin{tabular*}{\linewidth}{@{\extracolsep{\fill}}lcccc}
            \toprule
            $\eta_A/\eta_R$ & $\tau_{A,0}/\tau_{R,0}$ & $\tau_{A,f}/\tau_{R,f}$ & \textbf{Err (A/R)} \\
            \midrule
            0.05 / 0.05 & 0.95 / 0.05 & 0.988 / 0.147 & .054 / .053 \\
            \bottomrule
        \end{tabular*}
        \captionof{table}{Parameters for Sudoku ($\alpha = \beta = 0.05$).}
        \label{tab:sudoku_params_0_05}
    \end{minipage}
\end{figure*}



\subsection{Detailed Reasoning-Cost Tradeoffs}
\label{app:detailed-reasoning-cost-tradeoffs}
We next refine the reasoning--cost analysis by decoupling the error targets $\alpha$ and $\beta$. In the main text, we swept targets under the constraint $\alpha=\beta$, which provides a convenient one-parameter summary of the accuracy--cost tradeoff. Here we perform \emph{one-sided sweeps}: we fix one target and vary the other. This produces a \emph{family} of tradeoff curves and lets us see how the operating frontier changes when a service provider prioritizes one type of reliability over the other.

\paragraph{Why one-sided sweeps produce different curves.}
At the population level, the Pareto formulation in Section~\ref{sec:population} makes clear that type-I and type-II errors enter the objective with separate weights $(\lambda_1,\lambda_2)$ (cf.\ \eqref{pareto_formulation}). Varying these weights changes the relative penalty assigned to accepting incorrect responses versus rejecting correct ones, and therefore changes where the optimal policy places its two thresholds. In our sequential algorithm, $(\alpha,\beta)$ play an analogous role: smaller $\alpha$ forces the accept threshold $\tau_A$ upward, shrinking the accept region to reduce incorrect acceptances; smaller $\beta$ forces the reject threshold $\tau_R$ downward, shrinking the reject region to reduce incorrect rejections. Fixing one target therefore fixes the corresponding constraint pressure, while sweeping the other moves the remaining threshold and traces a different tradeoff curve.

The key observation across these plots is the variation in slope and curvature as the fixed target changes.
Figure~\ref{fig:app-bestofn-alphafixed-betavaries} shows tradeoff curves obtained by holding $\alpha$ fixed and sweeping $\beta$, for three difficulty subsets. Each line corresponds to a different fixed $\alpha$, and points along a line correspond to different $\beta$ values. Figure~\ref{fig:app-bestofn-betafixed-alphavaries} reports the complementary experiment, where $\beta$ is fixed and $\alpha$ is swept. Here, the fixed $\beta$ anchors the reject threshold $\tau_R$, while varying $\alpha$ primarily controls how much of the high-score region can be accepted without strong verification. Figure~\ref{fig:app-sudoku-one-sided-sweeps} shows the analogous one-sided sweeps for step-by-step Sudoku. 

The additional plots in this appendix show that different fixed values of $\alpha$ or $\beta$ lead to different Pareto frontiers, enabling finer-grained operating points.
Decoupling $(\alpha,\beta)$ reveals that there is not a single accuracy--cost frontier, but a family of frontiers indexed by which error type is held fixed. This gives a service provider a more precise control knob for selecting an operating point that matches task requirements and deployment constraints.



\begin{figure*}[ht]
    \centering
    \includegraphics[width=0.33\linewidth]{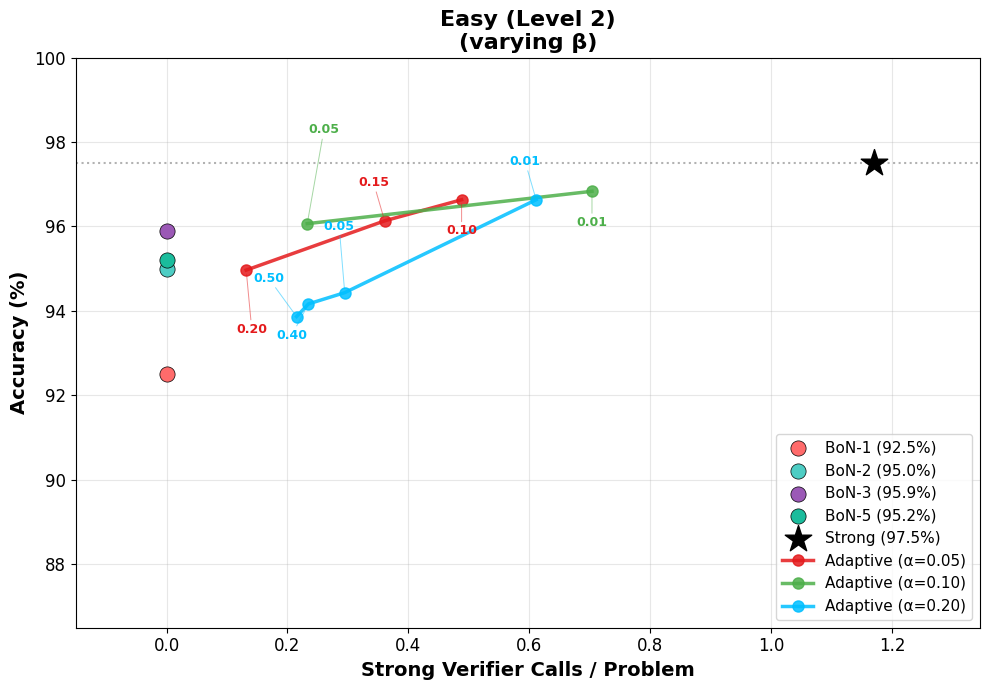}
    \includegraphics[width=0.33\linewidth]{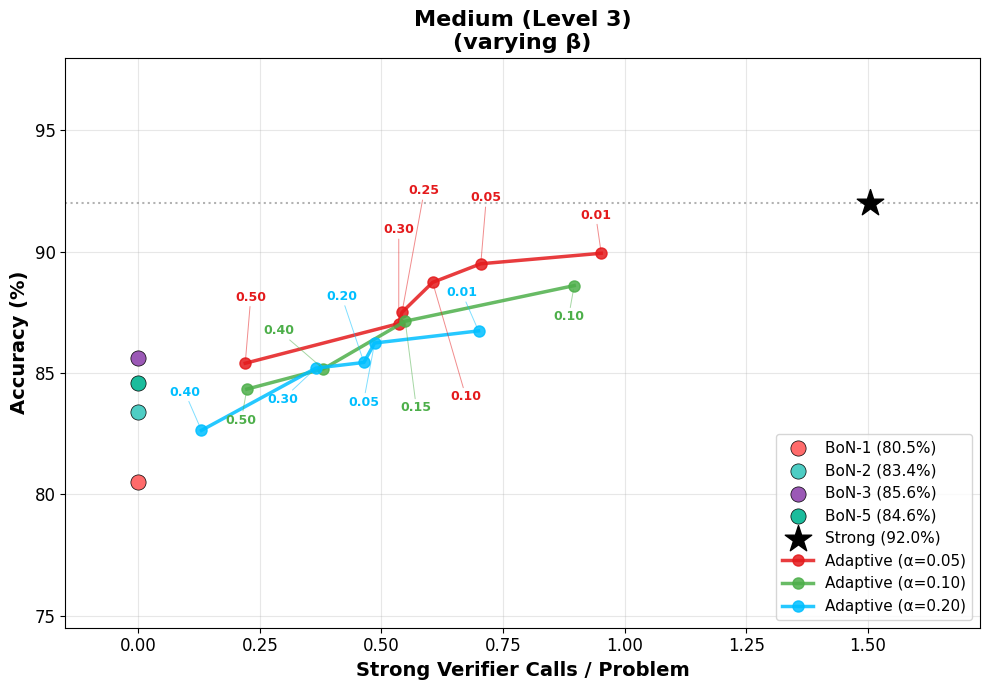}
    \includegraphics[width=0.33\linewidth]{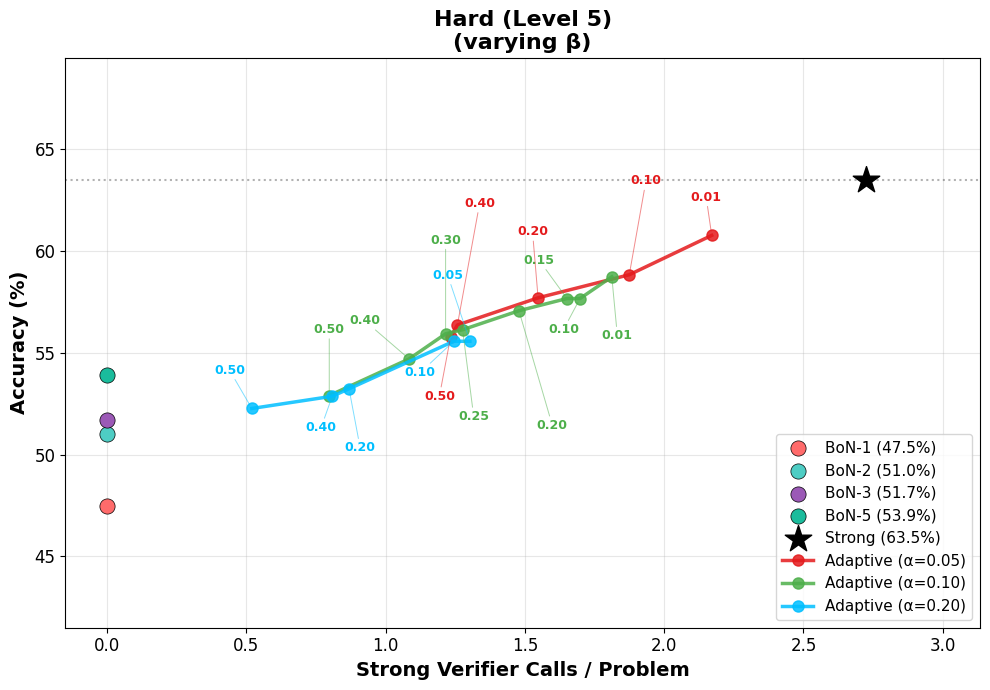}
    \caption{\textbf{Best-of-$n$ (MATH): one-sided sweeps with fixed $\alpha$ and varying $\beta$.}
    Reasoning accuracy vs.\ verification cost (strong-verification usage) for three difficulty subsets (left to right: Levels 2, 3, and 5). Each curve corresponds to a fixed value of $\alpha$ (type-I target), and points along a curve are obtained by sweeping $\beta$ (type-II target), tracing a family of accuracy--cost tradeoffs.}
    \label{fig:app-bestofn-alphafixed-betavaries}
\end{figure*}
\begin{figure*}[ht]
    \centering
    \includegraphics[width=0.33\linewidth]{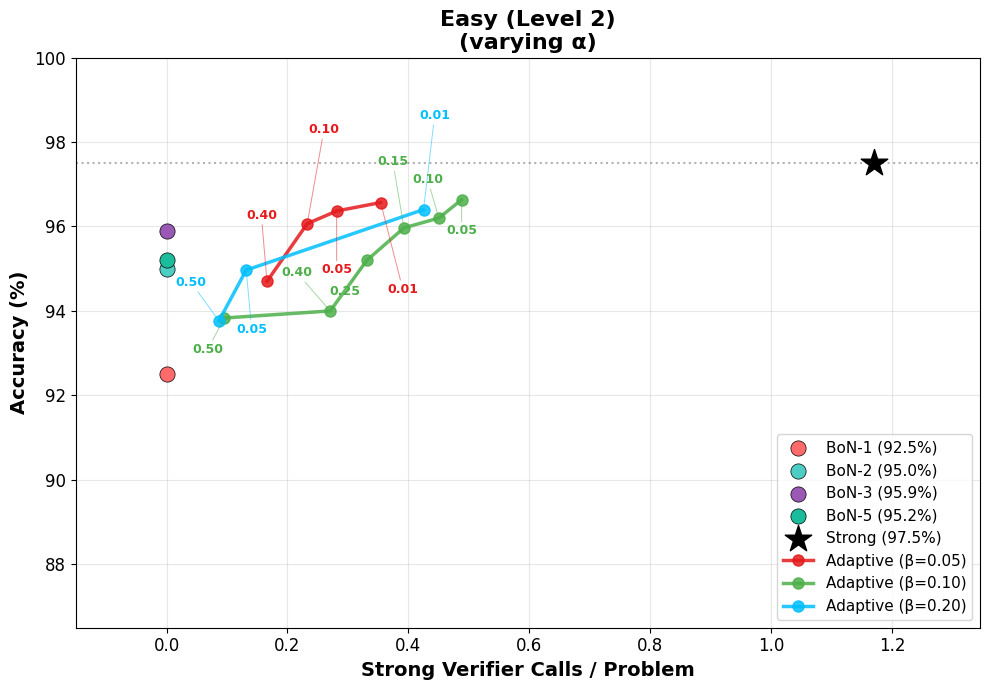}
    \includegraphics[width=0.33\linewidth]{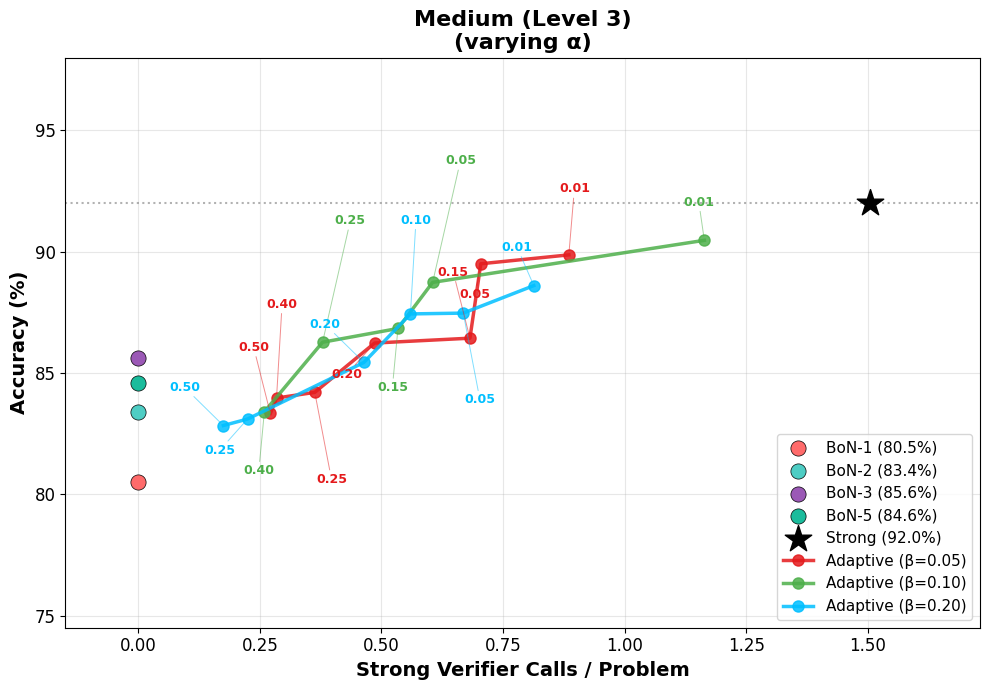}
    \includegraphics[width=0.33\linewidth]{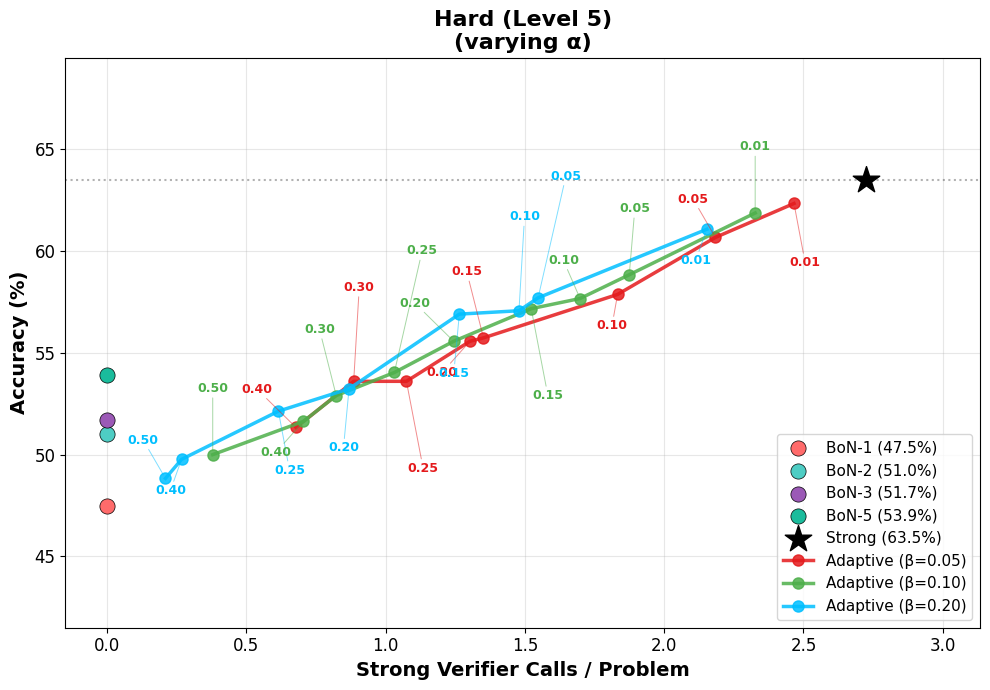}
    \caption{\textbf{Best-of-$n$ (MATH): one-sided sweeps with fixed $\beta$ and varying $\alpha$.}
    Reasoning accuracy vs.\ verification cost for three difficulty subsets (left to right: Levels 2, 3, and 5). Each curve corresponds to a fixed value of $\beta$ (type-II target), and points along a curve are obtained by sweeping $\alpha$ (type-I target). Together with Figure~\ref{fig:app-bestofn-alphafixed-betavaries}, these plots show how asymmetric error requirements induce different tradeoff frontiers.}
    \label{fig:app-bestofn-betafixed-alphavaries}
\end{figure*}
\begin{figure}[ht]
    \centering
    \begin{subfigure}[t]{0.49\linewidth}
        \centering
        \includegraphics[width=\linewidth]{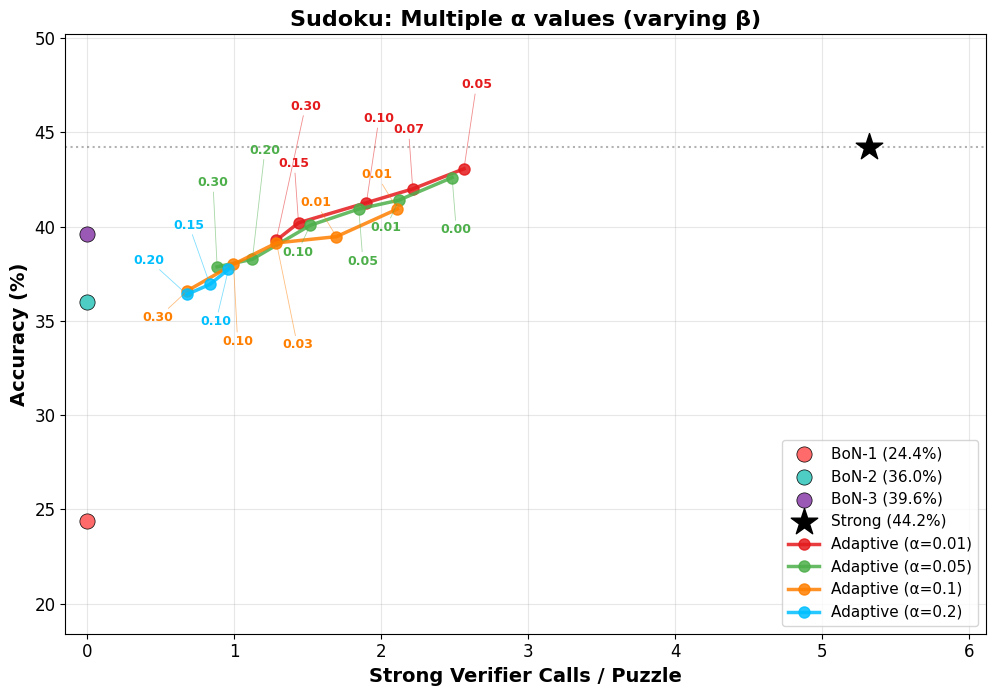}
        \caption{\textbf{Fixed $\alpha$, sweep $\beta$.} Each line fixes the type-I target $\alpha$; points sweep $\beta$, yielding different accuracy--cost curves.}
        \label{fig:app-sudoku-alphafixed-betavaries}
    \end{subfigure}\hfill
    \begin{subfigure}[t]{0.49\linewidth}
        \centering
        \includegraphics[width=\linewidth]{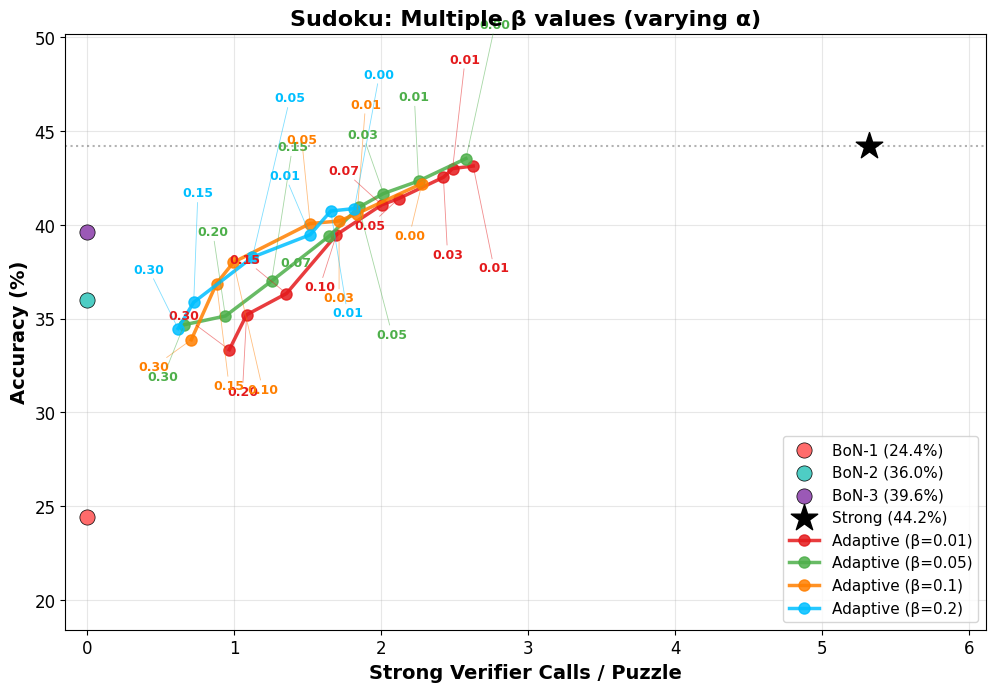}
        \caption{\textbf{Fixed $\beta$, sweep $\alpha$.} Each line fixes the type-II target $\beta$; points sweep $\alpha$, yielding different accuracy--cost curves.}
        \label{fig:app-sudoku-betafixed-alphavaries}
    \end{subfigure}

    \caption{\textbf{Step-by-step (Sudoku): asymmetric accuracy--cost tradeoffs under one-sided sweeps.}
    Decoupling $(\alpha,\beta)$ produces a family of operating curves rather than a single frontier (cf.\ the main-text sweeps with $\alpha=\beta$).}
    \label{fig:app-sudoku-one-sided-sweeps}
\end{figure}

\subsection{Analysis of Weak Verifier Scores}
\label{app:analysis-weak-verifier-scores}
To better interpret the accuracy--cost frontiers in Section~\ref{sec:experiments},  we analyze the weak verifier scores on the MATH outcome-level verification stream. The population analysis in Section~\ref{sec:population} highlights two properties that govern whether weak scores can reduce strong-verification usage without sacrificing reliability: \emph{sharpness} (scores are decisive) and \emph{calibration} (scores are interpretable). This appendix section quantifies both and connects them to the observed tradeoff curves.

\paragraph{Sharpness.}
Sharpness refers to the ability of the weak verifier to produce decisive scores near $0$ (confidently incorrect) or $1$ (confidently correct), effectively separating the distributions of the scores. In this section, we quantify sharpness using the metric $|Score - 0.5|$, where $0$ represents total uncertainty and $0.5$ represents maximum confidence.

As shown in Table~\ref{tab:sharpness_summary} and the corresponding distributions in Figure~\ref{fig:app-sharpness-dist}, the weak verifier for the Easy and Medium datasets exhibits exceptionally high sharpness. The mean sharpness values ($0.467$ and $0.448$) and high separation scores ($0.57$ and $0.54$) indicate the verifier is "certain" about its assessment for most queries. This decisive behavior is the primary driver for the relatively steep slopes observed in the accuracy-cost tradeoff curves in Section~\ref{sec:experiments}. Because scores are concentrated at the extremes, adaptive thresholds can safely accept or reject a large portion of the query stream without manual intervention, achieving near-Oracle accuracy with significantly reduced strong verification calls.

In contrast, the Hard dataset demonstrates significantly lower sharpness (Mean = $0.358$). As visualized in the score distributions in Figure~\ref{fig:app-score-dist-corr-incorr}, the overlap between correct and incorrect responses is much larger in this regime, with a separation of only $0.37$. This lack of sharpness explains the more linear slope in the Level 5 tradeoff curves. When the weak signal is ambiguous, the algorithm identifies that the risk-weighted cost of an automated mistake is high, forcing the policy to defer to the strong verifier more frequently.


\begin{figure}[ht]
    \centering
    \includegraphics[width=\linewidth]{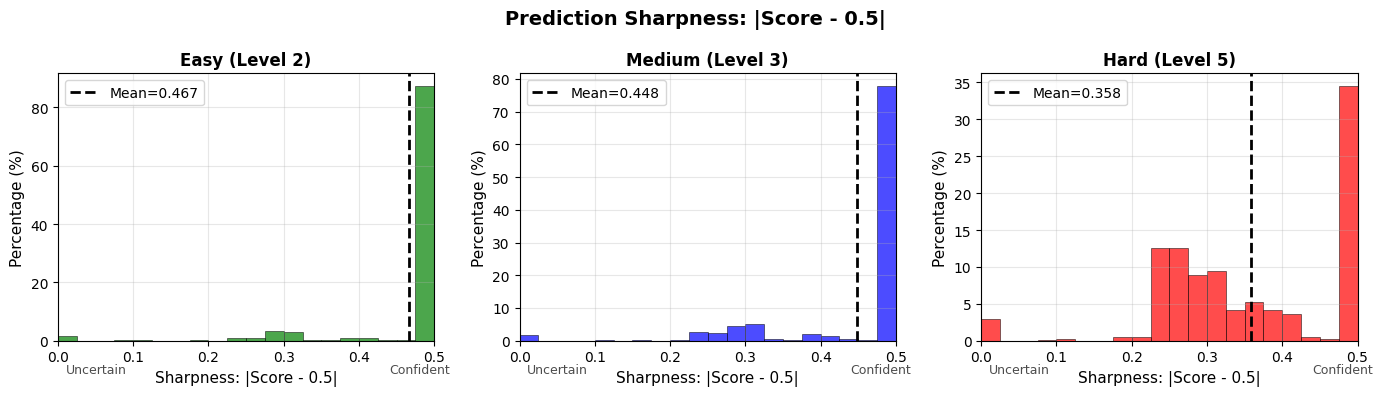}
    \caption{\textbf{Sharpness distributions across MATH difficulty.} Histograms of $|w-0.5|$: higher mass near $0.5$ indicates more decisive weak scores.}
    \label{fig:app-sharpness-dist}
\end{figure}

\begin{figure}[ht]
    \centering
    \includegraphics[width=\linewidth]{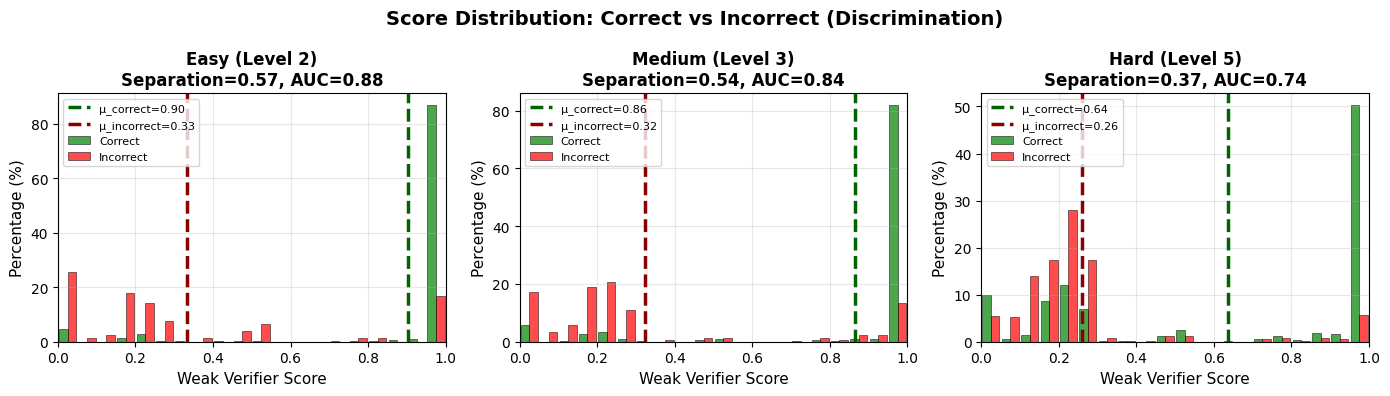}
    \caption{\textbf{Weak-score separation for correct vs.\ incorrect responses.} Score distributions conditioned on $g(P,R)=1$ vs.\ $g(P,R)=0$; greater overlap implies weaker discrimination and higher reliance on strong verification.}
    \label{fig:app-score-dist-corr-incorr}
\end{figure}

\begin{figure}[ht]
    \centering
    \includegraphics[width=\linewidth]{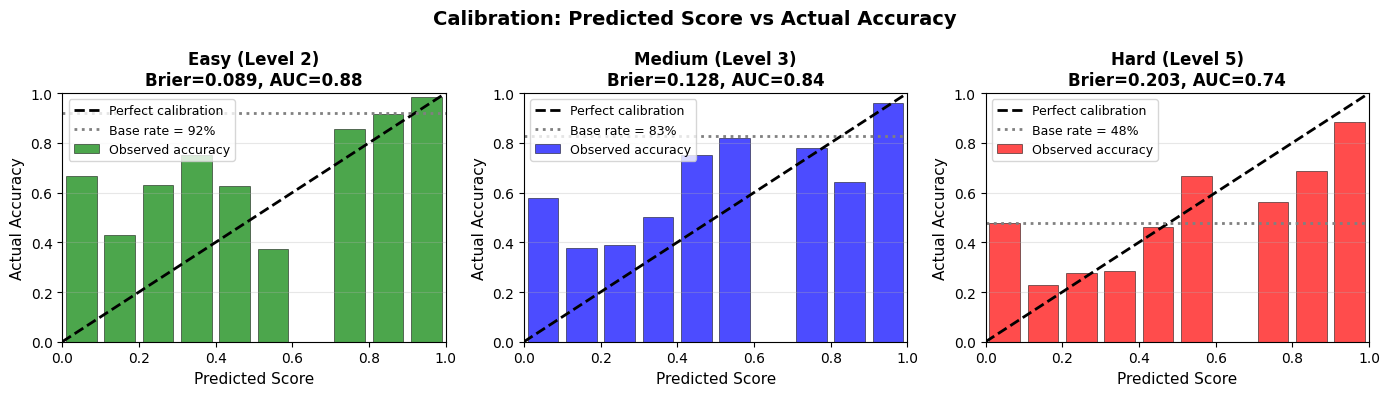}
    \caption{\textbf{Weak-score calibration on MATH.} Comparing empirical correctness rates to weak scores (perfect calibration lies on the diagonal).}
    \label{fig:app-weak-calibration}
\end{figure}

\begin{table}[htp]
\centering
\caption{\textbf{Sharpness of weak verifier scores across MATH difficulty.}
We report summary statistics of the sharpness metric $|w-0.5|$ (range $[0,0.5]$), where larger values indicate more decisive weak scores.}
\label{tab:sharpness_summary}
\small
\setlength{\tabcolsep}{6pt}
\begin{tabular}{lccc}
\toprule
\textbf{Difficulty} & \textbf{Mean} & \textbf{Median} & \textbf{Std.} \\
\midrule
Easy (Level 2)   & 0.467 & 0.496 & 0.084 \\
Medium (Level 3) & 0.448 & 0.495 & 0.100 \\
Hard (Level 5)   & 0.358 & 0.342 & 0.120 \\
\bottomrule
\end{tabular}
\end{table}

\paragraph{Calibration.} Beyond sharpness, weak scores must correlate with correctness. Table~\ref{tab:weak_verifier_summary} reports standard discrimination and calibration statistics, including AUC and Brier score, along with the mean weak score for correct and incorrect responses and their gap (“separation”). Figure~\ref{fig:app-score-dist-corr-incorr} visualizes the score distributions conditioned on $g(P,R)\in\{0,1\}$.

The Easy and Medium subsets show stronger separation between correct and incorrect responses (Table~\ref{tab:weak_verifier_summary}) and clearer conditional distributions (Figure~\ref{fig:app-score-dist-corr-incorr}), which supports reliable accept/reject decisions at the extremes of the score range. For Hard, the conditional distributions overlap substantially and separation decreases, so the weak score is less informative and strong verification remains necessary more often. Finally, Figure~\ref{fig:app-weak-calibration} summarizes calibration by comparing predicted score ranges to empirical correctness rates.

\begin{table}[htp]
\centering
\caption{\textbf{Weak verifier discrimination and calibration diagnostics on MATH.}
\textbf{AUC} measures discrimination between correct vs.\ incorrect responses (higher is better). \textbf{Brier} is mean squared error between $w$ and $g$ (lower is better). $\mu_{\text{correct}}$ and $\mu_{\text{incorrect}}$ are mean scores conditioned on $g=1$ and $g=0$, and \textbf{Separation} is $\mu_{\text{correct}}-\mu_{\text{incorrect}}$.}
\label{tab:weak_verifier_summary}
\small
\setlength{\tabcolsep}{4.5pt}
\begin{tabular}{lcccccc}
\toprule
\textbf{Difficulty} & \textbf{Base Acc.} & \textbf{AUC} & \textbf{Brier} & $\mu_{\text{correct}}$ & $\mu_{\text{incorrect}}$ & \textbf{Separation} \\
\midrule
Easy (Level 2)   & 92.2\% & 0.88 & 0.089 & 0.90 & 0.33 & 0.57 \\
Medium (Level 3) & 82.7\% & 0.84 & 0.128 & 0.86 & 0.32 & 0.54 \\
Hard (Level 5)   & 47.9\% & 0.74 & 0.203 & 0.64 & 0.26 & 0.37 \\
\bottomrule
\end{tabular}
\end{table}

\subsection{Verifier Implementation Details and Prompt Specifications}
\label{sec:app-weak-strong-verifier-details}
Across both tasks, we employ a multi-agent LLM pipeline to generate, score, and verify solutions. 
For the outcome-level (MATH) task, We utilize GPT-4o-mini \citep{openai2024gpt4o} as the solution generator and DeepSeek-Chat \citep{deepseek2024chat} as the weak verifier providing a continuous confidence signal $w \in [0, 1]$ used by our adaptive policy. The strong verifier for the MATH dataset is GPT-4o, which assesses mathematical equivalence between generated answers and ground-truth labels.  

For the process-level (Sudoku) task, we employ GPT-4o-mini as the step-by-step move generator and DeepSeek-Chat as the weak verifier. Unlike the math task, the strong verifier for Sudoku is a deterministic programmatic check that validates moves directly against the unique ground-truth solution of the $4\times4$ grid, removing the need for an additional LLM for ground-truth verification. 

Below we provide all prompt templates for both tasks; each is titled and categorized for clarity.

\begin{promptbox}{Generator (GPT-4o-mini) --- Prompt for Full-Solution Generation (MATH)}
\label{prompt:math-generator}
Solve this math problem step by step. Show your reasoning, then give the final answer. \\
\\
\textbf{PROBLEM:} \{\{problem\}\} \\
\\
\textbf{Return JSON only:} \\
\{ \\
\hspace*{1em}\texttt{"reasoning"}: \texttt{"<step-by-step reasoning>"}, \\
\hspace*{1em}\texttt{"final\_answer"}: \texttt{"<final answer only (e.g., 42, 3/4, x=5)>"} \\
\} \\
\\
\textbf{CRITICAL RULES:} \\
- Return \emph{only} the JSON object (no extra text). \\
- \texttt{final\_answer} must be just the final answer (no explanation).
\end{promptbox}

\begin{promptbox}{Weak Verifier (DeepSeek-Chat) --- Prompt for Scoring a Solution (MATH)}
\label{prompt:math-weak-verifier}
Rate the correctness of this math solution on a scale from 0.00 to 1.00. \\
\\
\textbf{PROBLEM:} \{\{problem\}\} \\
\\
\textbf{REASONING:} \\
\{\{reasoning\}\} \\
\\
\textbf{FINAL ANSWER:} \{\{final\_answer\}\} \\
\\
\textbf{Scoring guidance:} \\
1. Is the reasoning mathematically sound? \\
2. Are all calculations correct? \\
3. Does the final answer follow from the reasoning? \\
\\
\textbf{Return JSON only (nothing else):} \\
\{ \\
\hspace*{1em}\texttt{"score"}: \texttt{0.XX} \\
\}
\end{promptbox}

\begin{promptbox}{Strong Verifier (GPT-4o) --- Prompt for Answer Equivalence Checking (MATH)}
\label{prompt:math-strong-verifier}
Determine whether two math answers are equivalent. \\
\\
\textbf{PROBLEM:} \{\{problem\}\} \\
\\
\textbf{STUDENT'S REASONING:} \\
\{\{reasoning\}\} \\
\\
\textbf{STUDENT'S ANSWER:} \{\{generated\_answer\}\} \\
\textbf{CORRECT ANSWER:} \{\{ground\_truth\_answer\}\} \\
\\
\textbf{Equivalence guidance:} \\
- Different formats can be equivalent (e.g., 5, 5.0, 5/1, \texttt{\textbackslash boxed\{5\}}). \\
- Simplified vs.\ unsimplified fractions can be equivalent (e.g., 2/4 = 1/2). \\
- Notation differences can be equivalent when implied by the question (e.g., x=3 vs.\ 3). \\
\\
\textbf{Return JSON only:} \\
\{ \\
\hspace*{1em}\texttt{"is\_correct"}: \texttt{true} \textbf{or} \texttt{false}, \\
\hspace*{1em}\texttt{"explanation"}: \texttt{"<brief reason>"} \\
\}
\end{promptbox}
\begin{promptbox}{Generator (GPT-4o-mini) --- System Prompt (Sudoku: one-move-at-a-time)}
\label{prompt:sudoku-generator}
You solve 4x4 mini Sudoku by proposing ONE move at a time. \\
\\
\textbf{Rules:} \\
- Grid is 4x4 with 2x2 boxes. \\
- Fill each cell with 1--4. \\
- Each row/column/2x2 box must contain 1,2,3,4 exactly once. \\
- 0 means empty. \\
\\
\textbf{Return JSON only with keys:} \texttt{row, col, value, why, confidence}. \\
\{ \\
\hspace*{1em}\texttt{"row"}: \texttt{0--3}, \\
\hspace*{1em}\texttt{"col"}: \texttt{0--3}, \\
\hspace*{1em}\texttt{"value"}: \texttt{1--4}, \\
\hspace*{1em}\texttt{"why"}: \texttt{"One short LOCAL deduction (row/col/box)."}, \\
\hspace*{1em}\texttt{"confidence"}: \texttt{0.00--1.00} \\
\} \\
\\
\textbf{CRITICAL:} Output \emph{only} JSON. No extra keys. No markdown.
\end{promptbox}

\begin{promptbox}{Weak Verifier (DeepSeek-Chat) --- Prompt for Scoring a Cell Placement (Sudoku)}
\label{prompt:sudoku-weak-verifier}
You are scoring a proposed move in 4x4 Sudoku. \\
\\
\textbf{Current state:} \{\{current\_grid\}\} \\
\textbf{Previous moves:} \{\{previous\_steps\}\} \\
\\
\textbf{Proposed move:} place \{\{value\}\} at (\{\{row\}\}, \{\{col\}\}). \\
\\
\textbf{Question:} What is the probability (0.00 to 1.00) that this value is the \emph{correct} value for this cell? \\
Consider legality, whether logic forces this value, and whether alternatives remain. \\
\\
\textbf{Return only a single decimal number in [0.00, 1.00].}
\end{promptbox}

\end{document}